\documentclass{article}

\usepackage{microtype}
\usepackage{graphicx}
\usepackage{subcaption}
\usepackage{booktabs} 
\usepackage{tcolorbox}
\usepackage{enumitem} 

\usepackage{hyperref}

\usepackage[accepted]{icml2026}

\usepackage{amsmath}
\usepackage{amssymb}
\usepackage{mathtools}
\usepackage{amsthm}
\usepackage{float}

\usepackage[capitalize,noabbrev]{cleveref}

\theoremstyle{plain}
\newtheorem{theorem}{Theorem}[section]
\newtheorem{proposition}[theorem]{Proposition}
\newtheorem{lemma}[theorem]{Lemma}

\theoremstyle{definition}
\newtheorem{definition}[theorem]{Definition}

\theoremstyle{remark}
\newtheorem{remark}[theorem]{Remark}

\usepackage[textsize=tiny]{todonotes}

\icmltitlerunning{Anytime Safe PAC Efficient Reasoning}

\begin{document}

\twocolumn[
  \icmltitle{Anytime Safe PAC Efficient Reasoning}




  \icmlsetsymbol{equal}{*}

  \begin{icmlauthorlist}
    \icmlauthor{Chengyao Yu}{a,equal}
    \icmlauthor{Hao Zeng}{a,equal}
    \icmlauthor{Youxin Zhu}{a}
    \icmlauthor{Jianguo Huang}{b}
    \icmlauthor{Huajun Zeng}{c}
    \icmlauthor{Bingyi Jing}{d,e}
  \end{icmlauthorlist}

  \icmlaffiliation{a}{
  Department of Statistics and Data Science,
  Southern University of Science and Technology}
  \icmlaffiliation{b}{
  College of Computing and Data Science,
  Nanyang Technological University}
  \icmlaffiliation{c}{
  School of Statistics and Data Science,
  Zhejiang Gongshang University}
  \icmlaffiliation{d}{
  School of Artificial Intelligence,
  The Chinese University of Hong Kong, Shenzhen}
  \icmlaffiliation{e}{
  Shenzhen Loop Area Institute}

  \icmlcorrespondingauthor{Bingyi Jing}{bingyijing@cuhk.edu.cn}

  \icmlkeywords{Machine Learning, ICML}

  \vskip 0.3in
]



\printAffiliationsAndNotice{
\icmlEqualContribution
}  

\begin{abstract}
Large Reasoning Models (LRMs) have demonstrated remarkable performance on complex tasks but suffer from high computational costs and latency. While selective thinking strategies improve efficiency by routing easy queries to non-thinking models, existing approaches often incur uncontrollable errors, especially in online settings where the performance loss of a non-thinking model is only partially observed and data are non-stationary. To address this, we propose \textit{Betting Probably Approximately Correct} (B-PAC) \textit{reasoning}, a principled method that enables anytime safe and efficient online reasoning under partial feedback. Specifically, we utilize inverse propensity scoring estimators to construct test supermartingales for candidate thresholds, and then dynamically adjust the routing threshold based on the accumulated statistical evidence of safety. Theoretically, we establish the anytime-valid performance loss control and the efficiency of B-PAC reasoning. Extensive experiments demonstrate that B-PAC reasoning significantly reduces computational overhead, decreasing thinking model usage by up to 81.01\%, while controlling the performance loss below the user-specified level.
\end{abstract}

\section{Introduction}
Large reasoning models (LRMs) have exhibited remarkable capabilities on a range of challenging reasoning tasks, typically associated with generating long chain-of-thoughts (CoTs)~\cite{guo2025deepseek,yang2025qwen3,jaech2024openai}. Despite their powerful performance, LRMs tend to generate excessively long reasoning chains even for simple questions, referred to as the phenomenon of \textit{overthinking}~\cite{chen2024not,sui2025stop}, resulting in substantial computational overhead. 
More crucially, overthinking directly leads to increased end-to-end latency, which is often the dominant factor for user satisfaction in interactive applications~\cite{zhang2022opt,roller2021recipes}. These considerations highlight the necessity of inference acceleration in a query-adaptive manner.

Several approaches have been proposed to improve reasoning efficiency, among which a practical direction is \textit{selective thinking}: switching LRMs between non-thinking and thinking modes based on the difficulty of queries~\cite{cheng2025think,fang2025thinkless,li2025dynamicmind,liang2025thinkswitcher}. While these methods reduce computational demands, their decision rules are often heuristic, which may route complex queries to the non-thinking model, resulting in significant performance degradation~\cite{dekoninck2024unified}. Such failure is particularly concerning in challenging and high-stakes settings, such as mathematical reasoning, where a small fraction of misrouted cases can dominate overall quality. Therefore, it is essential to provide efficient reasoning methods with rigorous statistical guarantees on performance loss relative to the thinking model.

To achieve reliable, efficient reasoning, \textit{Probably Approximately Correct reasoning}~\cite{zeng2025pac} provides a solution by controlling the performance loss below a user-specified level with high probability for the efficient reasoning model. 
However, their method is designed for offline settings and requires a calibration dataset that is i.i.d. with the test data. This is excessively restrictive for an online setting where queries arrive sequentially, and access to a calibration dataset can be impractical. Even with a calibration set, such a method often leads to uncontrolled risk or low efficiency for non-stationary data. Another significant challenge for risk control in efficient online reasoning is \textit{partial feedback}, where potential performance loss is observed only if the thinking model is invoked. This partial observation makes standard empirical risk estimators biased, preventing the direct application of existing offline safety guarantees. Together, these motivate a foundational challenge: \textit{how to construct an anytime-valid selection rule to switch between the thinking model and the non-thinking model.}

To address the above challenge, we propose \textit{Betting PAC} (B-PAC) \textit{reasoning}, a method that ensures anytime-valid performance loss control for efficient online reasoning under partial feedback.
Specifically, for query $X_t$, B-PAC computes the uncertainty score of the non-thinking model $\tilde{f}(X_t)$, and accepts $\tilde{f}(X_t)$ with high probability if the score is below a carefully designed time-varying threshold $\hat{u}_{t-1}$; otherwise, the thinking model is taken for reliability.
The key to the safety of B-PAC lies in modeling threshold determination as a betting game, where evidence of safety is accumulated as virtual wealth~\cite{shafer2019game,ramdas2023game,waudby2024estimating}.
To address the partial observability of performance loss, B-PAC leverages inverse propensity scoring (IPS)~\cite{horvitz1952generalization} to construct a risk estimator, based on which the wealth process is constructed.
The threshold is then determined by fixed sequence testing~\cite{angelopoulos2025learn}, which can also be roughly regarded as the maximal value such that the associated wealth exceeds the designated safety level. 
In addition, we develop an adaptive betting strategy to maximize reasoning efficiency. Finally, we extend the application of B-PAC to \textit{non-stationary data} by leveraging the technique of mixture of martingales.

In theory, we establish the distribution-free, anytime-valid performance loss control of B-PAC reasoning for both i.i.d. and non-stationary data by leveraging the tools of (super)martingales and fixed sequence testing. 
We also prove that the proposed adaptive betting strategy achieves logarithmic regret against the optimal fixed strategy, ensuring that the wealth process grows at a near-optimal rate to rapidly identify the most efficient safe threshold.

We extensively evaluate B-PAC reasoning across benchmarks including MATH~\cite{hendrycks2021measuringmathematicalproblemsolving}, MMLU-Pro~\cite{wang2024mmluprorobustchallengingmultitask}, BIG-Bench Hard~\cite{srivastava2023imitationgamequantifyingextrapolating}, and Magpie~\cite{xu2024magpiealignmentdatasynthesis}. Results demonstrate that B-PAC reasoning outperforms existing methods by ensuring anytime-valid performance loss control while significantly reducing inference cost. For example, on Magpie with a loss tolerance of $\epsilon=0.08$, B-PAC routes only $18.99\%$ queries to the thinking model and achieves $41.37\%$ token savings, while keeping the empirical loss below $\epsilon$. 

We summarize our contributions as follows:
\begin{itemize}
\item[1.]
We propose B-PAC reasoning, a method for efficient online reasoning under partial feedback, without any access to the offline calibration dataset.
To the best of our knowledge, B-PAC reasoning is the first safe, model-agnostic, efficient reasoning method in online and non-stationary settings.
\item[2.] We establish rigorous statistical properties of B-PAC reasoning, including anytime-valid performance loss control for both i.i.d. and non-stationary data, as well as the efficiency of the threshold-updating strategy.
\item[3.] We provide comprehensive experiments on diverse reasoning benchmarks, demonstrating that B-PAC reasoning is anytime safe and efficient.
\end{itemize}


\section{Problem Setup}
Let $\mathcal{X}$ and $\mathcal{Y}$ denote the query and response spaces, respectively. We consider a framework with two LRMs: a thinking model $f:\mathcal{X}\to\mathcal{Y}$, which offers superior performance at a higher computational cost, and a non-thinking model $\tilde{f}:\mathcal{X}\to\mathcal{Y}$, which is computationally efficient but potentially less accurate. Consider the online setting where data arrive in a stream. That is, the test sample $(X_t,Y_t)\in \mathcal{X} \times\mathcal{Y}$ arrives sequentially for $t=1,2,\dots$, where $(X_t)_{t\geq1}$ are independent and identically distributed, and the response $(Y_t)_{t\geq1}$ is unobservable. 
Note that $Y_t$ does not need to be identically distributed. The aim of this paper is to build a sequence of more efficient composite models, denoted by $(\hat{f}_t)_{\geq 1}$, that provide probably approximately correct anytime-valid guarantees for their performance loss while improving efficiency relative to the thinking model $f$.

Formally, when $X_t$ arrives, we compute $\hat{f}_t(X_t)\in \{f(X_t),\tilde{f}(X_t)\}$, which determines whether the non-thinking model output $\tilde{f}(X_t)$ can be taken as the final answer of $X_t$ to reduce inference costs. Given an error tolerance $\epsilon>0$ and a confidence level $1-\alpha\in(0,1)$, our goal is to determine composite models $(\hat{f}_t)_{t\geq 1}$ that always control the performance loss below the level $\epsilon$ at any time $t$ with probability at least $1-\alpha$, while improving efficiency as much as possible. We formalize the anytime $(\epsilon,\alpha)$-PAC efficiency as follows.

\begin{definition}[Anytime $(\epsilon,\alpha)$-PAC efficiency]\label{DEF:anytime_valid}
A sequence of models $(\hat{f}_t)_{t\geq 0}$ is said to be anytime $(\epsilon,\alpha)$-PAC efficient with respect to a loss $l\in[0,1]$ if, for any given $\epsilon>0$ and $\alpha\in(0,1)$,
\begin{equation}\label{anytime_PACguarantee}
\mathbb{P}(\forall t\geq1:R_t(\hat{f}_t)\leq \epsilon) \geq 1-\alpha,
\end{equation}
where $R_t(\hat{f}_t)=\mathbb{E}_{X\sim P_X}[l(\hat{f}_t(X),f(X))]$ is the risk function at time $t$, measuring the performance loss with respect to the thinking model, $l(\cdot,\cdot)$ is a loss function, and $X\sim P_X$. 
The probability is taken over the randomness of the streaming data and internal randomization of the procedure that generates $(\hat{f}_t)_{t\geq1}$.
\end{definition}

\begin{remark}[Data Assumption]
The i.i.d. assumption of $X_t$ is mainly adopted to present the B-PAC reasoning method more clearly and to leverage fixed sequence testing to obtain a highly efficient model $\hat{f}_t$. 
In Section~\ref{section5}, we show that the B-PAC reasoning method can handle \textit{non-stationary} data.
\end{remark}

\begin{remark}[Loss Function] B-PAC reasoning works for any bounded loss function, such as 0-1 loss for verifiable tasks. We assume $l(\cdot,\cdot)\in[0,1]$ without loss of generality, since any bounded loss can be scaled to this interval. 
Note that performance loss is measured relative to $f$, not to the ground truth $Y_t$, which is reasonable and necessary in efficient reasoning; see Appendix~\ref{der_discussion} for a detailed discussion.
\end{remark}

\section{Betting PAC Reasoning}\label{B-PAC:iid}
This section introduces the method of B-PAC reasoning in three steps. 
We first introduce its working principle under a given threshold, and then explain how to update the threshold through the betting process. Finally, we provide an adaptive betting strategy to maximize reasoning efficiency. 

\subsection{Uncertainty-based Routing Mechanism}

At time $t$, we first use the non-thinking model $\tilde{f}$ to produce $\tilde{f}(X_t)$, and compute its uncertainty score $U_t\in[0,1]$.
For example, we can use verbalized confidence scores~\cite{xiong2023can,tian2023just,yang2024verbalized} from a non-thinking model or scores based on prediction logits~\cite{hao2023reasoning,huang2025look}.
In line with intuition, these uncertainty scores should be positively correlated with the likelihood of inconsistency with the thinking model $f$. Therefore, if $U_t$ is small, B-PAC reasoning tends to take $\tilde{f}(X_t)$ as a proxy of $f(X_t)$.
For $\tilde{f}(X_t)$ with large $U_t$, B-PAC reasoning invokes the thinking LRM to produce $f(X_t)$.

Formally, denote the calculated threshold at time $t-1$ by $\hat{u}_{t-1}$ (the construction will be introduced later).
Given the test point $X_t$ and threshold $\hat{u}_{t-1}$, B-PAC calls thinking model with probability of $\pi_t$, where
\begin{equation*}
\pi_t=\pi(U_t;\hat{u}_{t-1},\rho_t)=\mathbb{I}\{U_t \geq \hat{u}_{t-1}\} + \rho_t \mathbb{I}\{U_t < \hat{u}_{t-1}\},
\end{equation*}
where $\rho_t\in(0,1)$ is a minimum exploration probability at time $t$, which can be either a constant or a time-varying parameter determined by a specific strategy; see Section \ref{sec:3.3}.
More specifically, let $\xi_t \sim \text{Bernoulli}(\pi_{t})$.
The composite model is constructed by
\begin{equation}\label{composite_model}
\hat{f}_t(X_t)=(1-\xi_{t})\tilde{f}(X_t)+\xi_tf(X_t).
\end{equation}
For $U_t\geq \hat{u}_{t-1}$, we have $\hat{f}_t(X_t)=f(X_t)$. 
If $U_t < \hat{u}_{t-1}$, then $\hat{f}_t(X_t)=\tilde{f}(X_t)$ holds with a large probability $1-\rho_t$. 

\begin{remark}[Uncertainty Score]
Practitioners can utilize various existing uncertainty scores since the safety of B-PAC holds for any type of score; see Section 4. But to gain reasoning acceleration, scores $U_t$ correlated with the likelihood of disagreement with $f(X_t)$ are preferred. For detailed discussions, see Appendix~\ref{der_discussion}.
\end{remark}

\subsection{Threshold Selection as a Betting Process}\label{TSBP}
It remains to provide the strategy of constructing adaptive threshold $\{\hat{u}_{t}\}_{t=1}^{\infty}$, which is a key step to achieve the anytime $(\epsilon,\alpha)$-PAC efficiency.
At a high level, we achieve this goal by (a) constructing an IPS estimator for the realized risk; (b) constructing a supermartingale based on IPS estimator; and (c) leveraging the fixed sequence testing.

\paragraph{Step 1: IPS estimator.}
Denote the loss of the non-thinking model by $l_t=l(\tilde{f}(X_t),f(X_t))$, which is observable only if the thinking model is utilized (i.e., $\xi_t=1$). Note that $l_t$ is not the realized loss $l(\hat{f}_t(X_{t}),f(X_t))$ of B-PAC reasoning. 
To estimate the risk associated with threshold $u$ given partial feedback, we construct the IPS estimator for the risk by
\begin{equation}\label{IPS_estimator}
Z_t(u) =(1-\rho_{\text{min}})\frac{l_t}{\pi_t}\xi_t\mathbb{I}\{U_t < u\},
\end{equation}
where $\rho_{\min}=\inf_{t\geq1}\rho_t$.
Here, the term $\xi_t/\pi_t$ corrects the selection bias inherent in the partial feedback while the coefficient $(1-\rho_{\min})$ accounts for the time-varying
$\rho_t$, jointly ensuring that $Z_t(u)$ serves as a tight upper bound of the true risk under threshold $u$.

\paragraph{Step 2: Supermartingale.}
Denote the filtration $\mathcal{F}_t$ by the $\sigma$-algebra generated by observations up to time $t$, i.e.,
\begin{equation}\label{filtration}
\mathcal{F}_t=\sigma(\{(X_i,\tilde{f}(X_i),l_i\xi_i,U_i)\}_{i=1}^t).
\end{equation}
Intuitively, $\mathcal{F}_t$ represents all information available before time $t+1$.
For $\epsilon\in(0,1)$ and $u\in[0,1]$, let 
\begin{equation}\label{profit}
D_t(u)=\epsilon - Z_t(u).
\end{equation}
Set $K_0(u)=1$. We construct a process $(K_t(u))_{t\geq0}$ adapted to $(\mathcal{F}_t)_{t\geq0}$ by
\begin{equation}\label{wealth_process}
K_t(u)=K_{t-1}(u)(1+\lambda_t(u)D_t(u)),
\end{equation}
where $\lambda_t(u)\in\mathcal{F}_{t-1}$ is a non-negative random variable satisfying $1+\lambda_t(u)D_t(u)\geq0$.
In Section 4, we show that $(K_t(u))_{t\geq0}$ is a nonnegative supermartingale (Lemma~\ref{supermartingale_property}).

\paragraph{Step 3: Threshold selection.}
We complete the description of B-PAC reasoning by introducing how to update threshold $\hat u_{t}$ based on $K_t$.
Denote the search space of threshold by $\mathcal{U}=\{u^{(1)},\dots,u^{(N)}\}$, where $0= u^{(1)} <u^{(2)}<\dots<u^{(N)}=1$.
B-PAC reasoning determines the threshold $\hat{u}_t$ by
\begin{equation}\label{threshold}
\hat{u}_t=\max\left\{u^{(i)}\in \mathcal{U}: \forall j\leq i,
K_t(u^{(j)})\geq \frac{1}{\alpha}\right\},
\end{equation}
where the maximum is to obtain a most efficient model, since a large value $\hat{u}_t$ leads to a higher probability that $\xi_{t+1}=0$. If $\{u^{(i)}\in \mathcal{U}: \forall j\leq i, K_t(u^{(j)})\geq 1/\alpha\}=\emptyset$, we set $\hat{u}_t=0$.
The motivation behind \eqref{threshold} relies on our insights to Ville's inequality~\cite{howard2020time} and fixed sequence testing~\cite{angelopoulos2025learn}; see the proof of Theorem~\ref{anytime_valid_guarantee} in Appendix~\ref{appendix:proof}.
Here, we provide an intuitive explanation for $K_t(u)$ and  $\hat{u}_t$ as follows.

\paragraph{Betting explanation.}
We interpret $(K_t(u))_{t\geq0}$ as the accumulated capital of a gambler testing the null hypothesis that threshold $u$ is unsafe. In this game, the predictor $\lambda_t\in\mathcal{F}_{t-1}$ acts as the wager, determined prior to round $t$. The term $D_t(u)$ represents the payoff of each unit, comprising the risk tolerance $\epsilon$ against the estimated loss $Z_t(u)$.
Under the null hypothesis (unsafe threshold $u$), $(K_t(u))_{t\geq0}$ forms a supermartingale, implying that the expected wealth decreases over time, i.e., $\mathbb{E}[K_{t}(u)|\mathcal{F}_{t-1}]\leq K_{t-1}(u)$. Conversely, if $u$ is truly safe, the capital is expected to grow. Consequently, a large value of $K_t(u)$ serves as strong evidence of safety, motivating the selection rule in \eqref{threshold}. 

\begin{remark}[Non-trivial setting]
We only need to consider the case that $\epsilon< 1-\rho_t$ such that $D_t(u)<0$ when $\xi_t\mathbb{I}\{U_t<u\}=1$. For $\epsilon\geq 1-\rho_t$, the risk control becomes trivial.
\end{remark}

\subsection{Hyperparameter Selection}\label{sec:3.3}
The choices of $(\rho_t)_{t\geq0}$ and $(\lambda_t)_{t\geq1}$ do not influence the anytime $(\epsilon,\alpha)$-PAC efficient guarantee (see Theorem \ref{anytime_valid_guarantee}), but can have an impact on the reasoning efficiency.
We first provide an adaptive betting strategy of $(\lambda_t)_{t\geq1}$ as follows.

\paragraph{Choice of $\lambda_t$.}
To gain maximum efficiency, the predictor $\lambda_t(u)$ should maximize wealth $K_t(u)$, or equivalently, maximize $\log K_t(u)$.
By \eqref{profit}, we have
$D_t(u)\in[\epsilon-(1-\rho_{\text{min}})/\rho_t,\epsilon]$.
Under the constraint that $\lambda_t\geq 0$ and $1+\lambda_t(u)D_t(u)\geq 0$, we have
$\lambda_t\in[0,1/((1-\rho_{\text{min}})/\rho_t-\epsilon)]$.
To avoid the worst-case scenario $K_t=0$, we consider $\lambda_t\in[0,c/((1-\rho_{\text{min}})/\rho_t-\epsilon)]$, where $c\in(0,1)$ (usually $c$ is close to 1). 
Note that
\begin{align*}
\frac{d}{d\lambda}\log K_t(u)
&\approx \sum_{i=1}^{t-1} (D_i(u)-D_i^2(u)\lambda_i(t)).
\end{align*}
By optimization theorem~\cite{hazan2016introduction}, we choose 
\begin{equation}\label{hyparameter}
\lambda_t(u)=\min\left\{\max\left\{\frac{\sum_{i=0}^{t-1} D_i(u)}{\sum_{i=0}^{t-1}D_i^2(u)+1},0\right\},\frac{c}{M_t}\right\},
\end{equation}
where $M_t=\max\{\epsilon , (1-\rho_{\text{min}})/\rho_t-\epsilon\}$.

\paragraph{Choice of $\rho_t$.}
The value of $\rho_t$ impacts efficiency in two ways. First, a small $\rho_t$ leads to a large value of $Z_t(u)$ if $\xi_t\mathbb{I}\{U_t <u\}=1$, which forces a conservative betting strategy ($\lambda_t \leq c / (1/\rho_t - \epsilon) \approx 0$) to maintain non-negative wealth. This limits the wealth growth rate even when the threshold is safe. In contrast, a large $\rho_t$ permits an aggressive betting strategy, which accelerates the identification of optimal thresholds during the initial phase. Second, when $\hat{u}_t$ stabilizes, a large $\rho_t$ will lead to a low efficiency since we call the thinking model with at least $\rho_t$ probability.

To strike a balance between fast convergence (warm-up) and long-term efficiency (deployment), we recommend a two-stage exploration strategy that
\begin{equation}\label{eq:rho_strategy}
\rho_t=\rho_{\text{warm}}\mathbb{I}\{t\leq T_{\text{warm}}\}+\rho_{\text{deploy}} \mathbb{I}\{t> T_{\text{warm}}\},
\end{equation}
where $\rho_{\text{warm}}$ and $\rho_{\text{deploy}}$ take suitable large and small values, respectively, and $T_{\text{warm}}$ is a hyperparameter.
For this strategy, we have $\rho_{\text{min}}=\rho_{\text{deploy}}$.
The B-PAC reasoning method is summarized in Algorithm~\ref{alg:Betting_PAC}.

\begin{algorithm}[tb]
   \caption{Betting PAC Reasoning}
   \label{alg:Betting_PAC}
\begin{algorithmic}[1]
   \STATE {\bfseries Input:} Queries $(X_t)_{t\geq1}$; thinking model $f$, non-thinking model $\tilde{f}$, loss function $l$, error tolerance $\epsilon$, confidence level $\alpha$, minimum exploration probabilities $(\rho_t)_{t\geq 1}$ given by \eqref{eq:rho_strategy}, search space $\mathcal{U}$.
   \STATE {\bfseries Output:} response sequence $(\hat{f}_t(X_t))_{t\geq1}$.\\
   
   \textbf{Initialize:} $\hat u_{0} = 0$, $K_0=1$.

   \FOR{$t=1,2,\dots$}      
      \STATE Compute $\tilde{f}(X_t)$ and uncertainty score $U_t$.
      
      \STATE Compute $\pi_t = \mathbb{I}\{U_t \geq \hat{u}_{t-1}\} + \rho_t \mathbb{I}\{U_t < \hat{u}_{t-1}\}$.

      \STATE Sample $\xi_t \sim \text{Bernoulli}(\pi_t)$.

      \IF{$\xi_t =0$}
      \STATE $\hat{f}(X_t) \leftarrow \tilde{f}(X_t)$ 
      \ELSE
      \STATE $\hat{f}(X_t)  \leftarrow f(X_t)$

      \ENDIF

      \STATE Compute $Z_t(u)$ as in \eqref{IPS_estimator}, $u\in\mathcal{U}$.
      
      \STATE Compute $D_t(u)=\epsilon-Z_t(u)$, $u\in\mathcal{U}$.
      \STATE Compute the predictor $\lambda_t(u)$ as in \eqref{hyparameter}, $u\in\mathcal{U}$.
      
      \STATE Update $K_t(u)=K_{t-1}(u) (1+\lambda_t(u)D_t(u))$, $u\in\mathcal{U}$.

      \STATE Update the threshold $\hat{u}_t$ as in \eqref{threshold}.
   \ENDFOR
   \STATE {\bfseries Return} $(\hat{f}_t(X_t))_{t\geq1}$.
\end{algorithmic}
\end{algorithm}

\section{Theoretical Analysis}
In this section, we establish the theoretical foundations of B-PAC reasoning.
We first establish that $(\hat{f}_t)_{t\geq1}$ given by Algorithm~\ref{alg:Betting_PAC} is anytime $(\epsilon,\alpha)$-PAC efficient. Then we provide the efficiency of the betting strategy given by \eqref{hyparameter}.
All technical proofs are presented in Appendix~\ref{appendix:proof}.
For brevity, by \eqref{composite_model}, we re-parameterize the risk $R_t(\hat{f}_t)$ as $R_t(\hat{u}_{t-1})$ with a slight abuse of notation. Specifically, for fixed $u\in \mathcal{U}$, we have $R_t(u)=\mathbb{E}[l_t\mathbb{I}\{\xi_t(u)=0\}]$, where $\xi_t(u)\sim \text{Bernoulli}(\mathbb{I}\{U_t\geq u\}+\rho_t \mathbb{I}\{U_t <u\})$.

\subsection{Safety of B-PAC Reasoning}
Consider the i.i.d.~setting. By Lemma~\ref{equv}, we have $R_t(u)=(1-\rho_t)\mathbb{E}[l_t\mathbb{I}\{U_t <u\}]$.
Let $(\rho_t)_{t\geq1}$ given by \eqref{eq:rho_strategy} and denote the \textit{deployment risk} with threshold $u$ by
\begin{equation*}
R(u)=(1-\rho_{\text{deploy}})\mathbb{E}[l_t\mathbb{I}\{U_t <u\}].
\end{equation*} 
Define the null hypothesis by
\begin{equation*}
H_{0,u}: R(u) >\epsilon.
\end{equation*}
We begin by showing that for $u\in\mathcal{U}$, $(K_t(u))_{t\geq0}$~\eqref{wealth_process} is a nonnegative supermartingale under $H_{0,u}$.

\begin{lemma}[Supermartingale Property]\label{supermartingale_property}
Let $\mathcal{F}_t$, $D_t(u)$, and $\rho_t$ be defined by \eqref{filtration}, \eqref{profit}, and \eqref{eq:rho_strategy}, respectively.
Under the null hypothesis $H_{0,u}$, for any nonnegative $\lambda_t(u)\in \mathcal{F}_{t-1}$ satisfying $1+\lambda_t(u)D_t(u)\geq0$, the process $(K_t(u))_{t\geq0}$~\eqref{wealth_process} is a non-negative supermartingale with respect to $\mathcal{F}_t$.
\end{lemma}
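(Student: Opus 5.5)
The plan is to verify the three defining properties of a nonnegative supermartingale directly. These are adaptedness, nonnegativity (with integrability), and the conditional drift inequality $\mathbb{E}[K_t(u)\mid\mathcal{F}_{t-1}]\le K_{t-1}(u)$. The drift inequality will come from showing that, conditionally on the past, the IPS estimator is unbiased for a quantity at least $\epsilon$ under $H_{0,u}$.

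\textbf{Adaptedness and nonnegativity.} $Z_t(u)$ is a function of $l_t\xi_t$, $U_t$ and $\pi_t$. Here $\pi_t$ depends on $U_t$, on $\hat u_{t-1}\in\mathcal{F}_{t-1}$, and on the deterministic $\rho_t$. Hence $Z_t(u)$, and therefore $D_t(u)$, is $\mathcal{F}_t$-measurable. Since $\lambda_t(u)\in\mathcal{F}_{t-1}$, the recursion \eqref{wealth_process} shows by induction that $K_t(u)\in\mathcal{F}_t$. Nonnegativity also follows by induction, because $K_0=1$ and each factor satisfies $1+\lambda_tD_t\ge0$. For integrability, $|D_t(u)|\le \epsilon+(1-\rho_{\min})/\rho_{\min}$ is bounded. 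I would assume $\lambda_t$ is bounded, as it is for the choice \eqref{hyparameter}, or alternatively handle this via conditional expectations of nonnegative variables. Either way $K_t$ is integrable.

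\textbf{Drift inequality.} Write
\[
\mathbb{E}[K_t(u)\mid\mathcal{F}_{t-1}]=K_{t-1}(u)\bigl(1+\lambda_t(u)\,\mathbb{E}[D_t(u)\mid\mathcal{F}_{t-1}]\bigr),
\]
pulling out the $\mathcal{F}_{t-1}$-measurable factors. Since $\lambda_t\ge0$, it suffices to show $\mathbb{E}[Z_t(u)\mid\mathcal{F}_{t-1}]\ge\epsilon$.

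First condition additionally on $(X_t,U_t)$. The variable $\xi_t$ is Bernoulli$(\pi_t)$, drawn with fresh internal randomness, and $\pi_t\ge\rho_t>0$. Hence $\mathbb{E}[\xi_t/\pi_t\mid\mathcal{F}_{t-1},X_t]=1$. Because $l_t$ is a deterministic function of $X_t$, this gives
\[
\mathbb{E}[Z_t(u)\mid\mathcal{F}_{t-1}]=(1-\rho_{\min})\,\mathbb{E}[l_t\mathbb{I}\{U_t<u\}\mid\mathcal{F}_{t-1}].
\]
By the i.i.d.\ assumption, $X_t$ is independent of $\mathcal{F}_{t-1}$, so the right-hand side equals $(1-\rho_{\min})\mathbb{E}[l_t\mathbb{I}\{U_t<u\}]$. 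With $\rho_{\min}=\rho_{\text{deploy}}$ under \eqref{eq:rho_strategy}, this is exactly $R(u)$. Under $H_{0,u}$ we have $R(u)>\epsilon$, hence $\mathbb{E}[D_t(u)\mid\mathcal{F}_{t-1}]<0$ and the inequality follows.

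\textbf{Main obstacle.} The delicate step is justifying the IPS unbiasedness with respect to the specified filtration. There are two issues to settle.

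First, I need $\xi_t$ to be conditionally Bernoulli$(\pi_t)$ given $\mathcal{F}_{t-1}$ and $X_t$. This holds because $\hat u_{t-1}$ is determined by $\mathcal{F}_{t-1}$ together with past $\xi_i$. The filtration records $l_i\xi_i$ rather than $\xi_i$ itself. If needed, I would enlarge $\mathcal{F}_t$ to include $\xi_i$; this keeps the argument intact and still makes $\hat u_t$ measurable.

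Second, I must confirm that $U_t$ and $l_t$ are functions of $X_t$ alone, or at least independent of the past. This makes the conditional expectation equal to the unconditional deployment quantity, which is where the i.i.d.\ assumption on $(X_t)$ enters.

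The factor $(1-\rho_{\min})$, rather than $(1-\rho_t)$, is precisely what makes the drift match $R(u)$ uniformly in $t$, including during the warm-up phase.
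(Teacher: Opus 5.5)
Your proposal is correct and follows essentially the same route as the paper. You pull the predictable factors $K_{t-1}(u)$ and $\lambda_t(u)$ out of the conditional expectation, use $\mathbb{E}[\xi_t/\pi_t\mid\mathcal{F}_{t-1},X_t]=1$ and the i.i.d.\ assumption to get $\mathbb{E}[D_t(u)\mid\mathcal{F}_{t-1}]=\epsilon-R(u)$ (the paper's Lemma~\ref{equv}), and conclude from $H_{0,u}$ and $\lambda_t(u)\ge0$. Your extra care on adaptedness and integrability goes beyond the paper, but the filtration enlargement is not needed: $Z_i(u)$ depends on $\xi_i$ only through the recorded product $l_i\xi_i$ and through $\pi_i$, a function of $U_i$ and $\hat u_{i-1}$, so by induction $\hat u_{t-1}$ is already $\mathcal{F}_{t-1}$-measurable.
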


We observe that $R(u)$ exhibits inherent monotonicity: as $u$ increases, the condition $\mathbb{I}\{U_t< u\}$ is satisfied more frequently, leading to a non-decreasing risk.
This monotonicity allows us to leverage Lemma~\ref{supermartingale_property} and fixed sequence testing to derive the safety guarantee.

\begin{theorem}[Anytime~$(\epsilon,\alpha)$-PAC efficient]\label{anytime_valid_guarantee}
Let $(K_t(u))_{t\geq0}$, $\hat u_t$, and $(\rho_t)_{t\geq0}$ be defined by \eqref{profit}, \eqref{threshold}, and \eqref{eq:rho_strategy}, respectively. For any $\alpha\in (0,1)$, $\epsilon\in(0,1)$, and any nonnegative $\lambda_t\in\mathcal{F}_{t-1}$ with $1+\lambda_t D_t(u)\geq0$, B-PAC reasoning satisfies that
\begin{equation}\label{anytime-valid}
\mathbb{P}(\forall t\in \mathbb{N}:R_t(\hat{u}_t)\leq \epsilon) \geq 1-\alpha.
\end{equation}
\end{theorem}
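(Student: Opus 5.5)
The argument combines Lemma~\ref{supermartingale_property} with Ville's inequality and fixed sequence testing; the monotonicity of $R(u)$ in $u$ lets us avoid any union bound over $\mathcal{U}$.

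\textbf{Step 1: locate the critical threshold.} Since $R(u)=(1-\rho_{\text{deploy}})\mathbb{E}[l_t\mathbb{I}\{U_t<u\}]$ is non-decreasing in $u$ and $R(u^{(1)})=R(0)=0\le\epsilon$, define
\[
i^\ast=\min\{i: R(u^{(i)})>\epsilon\}.
\]
If no such index exists, every $u\in\mathcal{U}$ is safe. In that case the claim follows once we relate $R_t$ to $R$, which we do in Step~4. Otherwise $H_{0,u^{(i^\ast)}}$ is true.

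\textbf{Step 2: a single Ville bound.} By Lemma~\ref{supermartingale_property}, $(K_t(u^{(i^\ast)}))_{t\ge0}$ is a nonnegative supermartingale with $K_0=1$. Ville's inequality then gives
\[
\mathbb{P}\bigl(\exists t: K_t(u^{(i^\ast)})\ge 1/\alpha\bigr)\le\alpha.
\]

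\textbf{Step 3: fixed sequence testing.} On the complementary event, of probability at least $1-\alpha$, we have $K_t(u^{(i^\ast)})<1/\alpha$ for every $t$. By the definition \eqref{threshold}, $\hat u_t$ requires $K_t(u^{(j)})\ge 1/\alpha$ for all $j\le i$. Hence $\hat u_t<u^{(i^\ast)}$, so $\hat u_t\le u^{(i^\ast-1)}$ for all $t$ simultaneously. By monotonicity and minimality of $i^\ast$, this gives $R(\hat u_t)\le\epsilon$ for all $t$ on this event.

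\textbf{Step 4: from $R$ to $R_t$.} The risk in Definition~\ref{DEF:anytime_valid} at step $t$ is
\[
R_t(u)=(1-\rho_t)\mathbb{E}[l\,\mathbb{I}\{U<u\}]
\]
by Lemma~\ref{equv}, evaluated at the data-dependent but fixed-by-then threshold $u=\hat u_t$. The expectation is over a fresh $X\sim P_X$, which is independent of $\hat u_t$ in the i.i.d.\ setting. Since $\rho_t\ge\rho_{\text{deploy}}=\rho_{\min}$, we have
\[
R_t(u)\le(1-\rho_{\min})\mathbb{E}[l\,\mathbb{I}\{U<u\}]=R(u).
\]
Therefore $R_t(\hat u_t)\le R(\hat u_t)\le\epsilon$ for all $t$, which is the claim.

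The same bound handles the case where no $i^\ast$ exists, since then $R(\hat u_t)\le\epsilon$ holds trivially for all $t$.

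\textbf{Main obstacle.} The delicate point is Lemma~\ref{supermartingale_property}'s use under the adaptively chosen $\pi_t$, which depends on $\hat u_{t-1}$. We need
\[
\mathbb{E}[Z_t(u)\mid\mathcal{F}_{t-1}]=(1-\rho_{\min})\mathbb{E}[l_t\mathbb{I}\{U_t<u\}]=R(u)>\epsilon
\]
under the null. This holds because $\mathbb{E}[\xi_t/\pi_t\mid X_t,\mathcal{F}_{t-1}]=1$, with $\pi_t>0$ always, and $X_t$ is independent of $\mathcal{F}_{t-1}$. The lemma already provides this, so here we only need to check that the theorem's time indexing ($\hat u_t$ versus $\hat u_{t-1}$) matches. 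Because the uniform-in-$t$ event of Step~3 covers every index, either convention works.
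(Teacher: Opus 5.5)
Your proposal is correct and follows the paper's proof: the paper also takes the first unsafe grid point $u^*=\min\{u\in\mathcal{U}:R(u)>\epsilon\}$ and uses fixed sequence testing to get $\{\exists t:\hat u_t\ge u^*\}\subseteq\{\exists t:K_t(u^*)\ge 1/\alpha\}$. It then applies Lemma~\ref{supermartingale_property} with Ville's inequality to this single process and concludes with $R_t(u)\le R(u)$ from Lemma~\ref{equv}; your explicit handling of the case with no unsafe grid point and of the $\hat u_t$ versus $\hat u_{t-1}$ indexing covers details the paper leaves implicit.
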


\subsection{Efficiency of B-PAC Reasoning}
Although the safety of the B-PAC reasoning holds regardless of the choice of $\lambda_t$, how quickly it identifies the tightest threshold depends on the growth rate of wealth $K_t(u)$.
For any fixed $u\in\mathcal{U}$, this objective can be restated as maximizing $\log K_T(u)=\sum_{t=1}^T\log(1+\lambda D_t(u))$ for $T\in \mathbb{N}$.
To achieve a computationally efficient strategy with closed-form updates, we follow the standard approach of maximizing a quadratic surrogate.
Based on the second-order Taylor expansion $\log(1+x)\approx x-x^2/2$, we define the quadratic proxy for the wealth growth at time $t$ by
\begin{equation*}\label{quad_proxy}
g_t(\lambda;u) = \lambda D_t(u) - \frac{1}{2}\lambda^2 D_t^2(u).
\end{equation*}
Since $\lambda_t\in[0,c/((1-\rho_{\text{min}})/\rho_t-\epsilon)]$, we denote $\lambda_T^*$ by
\begin{equation}\label{oracle_lambda}
\lambda_T^*(u)=\arg\max_{0\leq\lambda\leq c/((1-\rho_{\text{min}})/\rho_T-\epsilon)}\sum_{t=1}^{T}g_t(\lambda;u).
\end{equation}
The following theorem establishes the efficiency of the proposed strategy $(\lambda_t(u))_{t\geq0}$.
\begin{theorem}\label{efficiency}
Let $\lambda_t(u)$, $\rho_t$, and $\lambda_T^*$ be defined by \eqref{hyparameter}, \eqref{eq:rho_strategy}, and \eqref{oracle_lambda}, respectively. Define the regret of strategy $(\lambda_t(u))_{t\geq 0}$ by $ \mathcal{R}^{\text{quad}}_{T}=\sum_{t=1}^T\left(g_t(\lambda^*_T(u))-g_t(\lambda_t(u))\right)$.
For $T>T_{\text{warm}}$, we have
\begin{equation*}
\mathcal{R}^{\text{quad}}_T \leq \frac{ c^2}{2M^2} + \frac{(1+c)^2M^2}{2\beta \log(1+M^2)} \log\left(T M^2 + 1\right),
\end{equation*}
where $M= \max\{\epsilon , 1/\rho_{\text{deploy}}-(1+\epsilon)\}$.
\end{theorem}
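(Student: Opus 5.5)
Our plan is to recognise the rule \eqref{hyparameter} as Follow-The-Leader (equivalently Online Newton Step with a unit regulariser) on the concave quadratic losses $-g_t(\lambda;u)$, truncated to the feasible interval, and to bound the regret through the standard logarithmic-regret argument for exp-concave losses \cite{hazan2016introduction}. Fix $u$ and write $D_t=D_t(u)$. The losses $\ell_t(\lambda)=-g_t(\lambda;u)=-\lambda D_t+\tfrac12\lambda^2D_t^2$ are one-dimensional quadratics with curvature $D_t^2$. The unconstrained minimiser of $\tfrac12\lambda^2+\sum_{i<t}\ell_i(\lambda)$ is $\sum_{i<t}D_i/(1+\sum_{i<t}D_i^2)$. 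Projecting it onto $[0,c/M_t]$ recovers \eqref{hyparameter}. Hence $\lambda_t$ is exactly the regularised-FTL / ONS iterate with $A_{t-1}=1+\sum_{i<t}D_i^2$.

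\textbf{Step 1 (domain).} For $t>T_{\text{warm}}$ we have $\rho_t=\rho_{\text{deploy}}=\rho_{\min}$. Hence $(1-\rho_{\min})/\rho_t-\epsilon=1/\rho_{\text{deploy}}-1-\epsilon$, so $M_t=M$ and the comparator set $[0,c/M]$ in \eqref{oracle_lambda} is fixed. During warm-up, $\rho_t$ is larger, so $M_t\le M$ and the domains are nested supersets. We will handle the warm-up rounds by noting that $|D_t|\le M$ for all $t$. The key quantities are then $|\lambda|\le c/M$ (diameter), the gradient bound $|\nabla\ell_t|=|D_t(1-\lambda D_t)|\le M(1+c)$, and $D_t^2\le M^2$.

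\textbf{Step 2 (FTRL/ONS regret decomposition).} We apply the be-the-leader lemma with the regulariser $R(\lambda)=\tfrac12\lambda^2$:
\[
\mathcal R_T\le R(\lambda^*)-R(\lambda_1)+\sum_t\bigl(\ell_t(\lambda_t)-\ell_t(\lambda_{t+1})\bigr).
\]
The first term is at most $c^2/(2M^2)$, which gives the constant. Each stability term is at most $|\nabla\ell_t|^2/A_t$ times a curvature factor. Using exp-concavity with parameter $\beta$ (the $\beta$ in the statement, the usual $\beta=\tfrac12\min\{1/(4GD),\gamma\}$-type constant), the per-round bound takes the form $\frac{G^2}{\beta}\cdot\frac{D_t^2/M^2}{\ldots}$. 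We will then use the elementary inequality $\sum_t a_t/(1+\sum_{i\le t}a_i)\le\log(1+\sum_t a_t)$ with $a_t=D_t^2\le M^2$. The factor $1/\log(1+M^2)$ arises from normalising $x\le \frac{M^2}{\log(1+M^2)}\log(1+x)$ for $x\in[0,M^2]$. This converts $D_t^2/A_t$ into a log-ratio of consecutive $A_t$, and summing telescopes to $\log(1+TM^2)$. Multiplying by $G^2/(2\beta)=(1+c)^2M^2/(2\beta)$ yields the second term.

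\textbf{Step 3 (projection).} Projection onto the interval is nonexpansive in the $A_t$-norm in one dimension. So the clipped iterate satisfies the same stability bound, by the standard Pythagorean argument.

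We expect the main obstacle to be this stability step. The losses have data-dependent curvature $D_t^2$, and the time-varying domain makes $\lambda_t$ not exactly FTRL over a single set. We would first try to prove the per-round inequality $\ell_t(\lambda_t)-\ell_t(\lambda_{t+1})\le \frac{(1+c)^2M^2}{2\beta\log(1+M^2)}\log(A_t/A_{t-1})$ directly, using the 1-D projection property and the bound $A_t\le A_{t-1}+M^2$. Any slack from the warm-up rounds would be absorbed using that these rounds' domains contain $[0,c/M]$.
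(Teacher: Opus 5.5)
Your skeleton matches the paper's proof. The rule \eqref{hyparameter} is FTRL with ridge weight $\beta=1$ on the losses $\ell_t=-g_t$, and the regulariser contributes $c^2/(2M^2)$. The gradient bound is $|\ell_t'(\lambda_t)|=|D_t||1-\lambda_tD_t|\le(1+c)|D_t|$, and your normalisation $x\le\frac{M^2}{\log(1+M^2)}\log(1+x)$ is exactly Lemma~\ref{sum_log}.

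Two corrections to the framing. The $\beta$ in the statement is that ridge weight, already set to $1$; it is not an exp-concavity constant. The update is also not ONS: ONS accumulates $(\ell_t'(\lambda_t))^2$, not $D_t^2$. So the exp-concavity layer plays no role.

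\textbf{The gap is the stability step.} You leave it open and propose to settle it through the be-the-leader decomposition, but that decomposition loses a factor $2$. In the interior case, $\lambda_{t+1}=\lambda_t-\ell_t'(\lambda_t)/A_t$ with $A_t=1+\sum_{i\le t}D_i^2$. Hence
\[
\ell_t(\lambda_t)-\ell_t(\lambda_{t+1})=\frac{(\ell_t'(\lambda_t))^2}{A_t}\Bigl(1-\frac{D_t^2}{2A_t}\Bigr),
\]
which is about $(\ell_t'(\lambda_t))^2/A_{t-1}$ when $D_t^2\ll A_{t-1}$. The stated bound budgets only about $\frac{(1+c)^2M^2}{2\log(1+M^2)}D_t^2/A_{t-1}$ per round.

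Consequently your per-round inequality can fail when $M^2<2\log(1+M^2)$. Take $\epsilon=0.5$, $\rho_{\text{deploy}}=0.4$, $c=0.9$, so that $M=1$ and $D_t\in[-1,0.5]$. An interior $\lambda_t\approx0.7$ (reachable with mixed histories), a large $A_{t-1}$ and $D_t=-1$ give a left side $\approx2.89/A_{t-1}$ against a right side $\approx2.60/A_{t-1}$. Summed, your route yields $(1+c)^2\log(1+TM^2)$. This suffices only when $M^2\ge2\log(1+M^2)$, roughly $M\ge1.59$. That holds for the paper's $\rho_{\text{deploy}}=0.05$ but not in general.

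Step 3 also breaks at $t=T_{\text{warm}}$. There $\lambda_t$ can lie in $[0,c/M_t]\setminus[0,c/M]$, and clipping onto the smaller interval moves it by an amount not controlled by $|\ell_t'(\lambda_t)|/A_t$. The fact that the warm-up domains contain $[0,c/M]$ rescues only the be-the-leader induction, not this jump.

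\textbf{The missing ingredient} is the strong-convexity form of the FTRL bound that the paper invokes: regret at most $\psi(\lambda_T^*)-\psi(\lambda_1)+\frac12\sum_t(\nabla g_t(\lambda_t))^2/A_{t-1}$. You can prove it by telescoping potentials rather than iterates. Let $F_t(\lambda)=\frac12\lambda^2+\sum_{i<t}\ell_i(\lambda)$, $\mathcal{K}_t=[0,c/M_t]$ and $\lambda_t=\arg\min_{\mathcal{K}_t}F_t$.

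First-order optimality of $\lambda_t$ on $\mathcal{K}_t$, together with the curvature $A_t$ of $F_{t+1}$, gives $F_{t+1}(\lambda_t)-\min_{\mathcal{K}_t}F_{t+1}\le(\ell_t'(\lambda_t))^2/(2A_t)$. Since $\rho_t$ is non-increasing, $\mathcal{K}_{t+1}\subseteq\mathcal{K}_t$. Hence $\min_{\mathcal{K}_{t+1}}F_{t+1}\ge\min_{\mathcal{K}_t}F_{t+1}$, which disposes of the domain switch.

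Now sum $F_{t+1}(\lambda_t)-\min_{\mathcal{K}_{t+1}}F_{t+1}=\min_{\mathcal{K}_t}F_t+\ell_t(\lambda_t)-\min_{\mathcal{K}_{t+1}}F_{t+1}$ over $t$. Using $\min_{\mathcal{K}_1}F_1=0$ and $\lambda_T^*\in\mathcal{K}_{T+1}=[0,c/M]$, this yields
\[
\mathcal{R}^{\text{quad}}_T\le\frac{c^2}{2M^2}+\frac{(1+c)^2}{2}\sum_{t=1}^T\frac{D_t^2}{A_t}\le\frac{c^2}{2M^2}+\frac{(1+c)^2}{2}\log(1+TM^2).
\]
This implies the stated bound because $M^2\ge\log(1+M^2)$. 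If you bound $1/A_t\le1/A_{t-1}$ and apply Lemma~\ref{sum_log}, you recover the paper's exact form.
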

Therefore, we have $\mathcal{R}^{\text{quad}}_T=O(\log T)$, which means that B-PAC can match the optimal rate.
Consequently, B-PAC can rapidly converge to the optimal efficient threshold, thereby maximizing the reasoning efficiency.

\section{Extensions to Non-Stationary Data}\label{section5}
In this section, we extend the method of B-PAC reasoning to handle potential non-stationary data $(X_{t})_{t\geq0}$, such as data with distribution shifts, temporal drifts, or even adversarial inputs. These settings are common in real-world deployment such as multi-turn dialogue scenarios.

\subsection{B-PAC Reasoning for Non-Stationary Data}
By modifying the equation \eqref{threshold} of Step 3, the framework of B-PAC reasoning introduced in Section~\ref{B-PAC:iid} also holds for non-stationary data.
Specifically, let $\nu(u)$ be the prior probability mass for threshold $u$ with $\sum_{u\in\mathcal{U}}\nu(u)=1$.
The data-dependent threshold $\hat{u}_t$ at time $t$ is determined by
\begin{equation}\label{bayes_PAC}
\hat{u}_t=\max\left\{u\in \mathcal{U}: K_t(u)\geq\frac{1}{\alpha \nu(u)}\right\},
\end{equation}
where $K_t(u)$ is defined the same as \eqref{wealth_process}.
If $\{u\in \mathcal{U}: K_t(u)\geq 1/ \alpha \nu(u)\}=\emptyset$, we set $\hat{u}_t=0$.

The prior $\nu(u)$ enables the incorporation of prior knowledge to improve efficiency. For example, a large probability mass can be allocated to a relatively large threshold $u\in\mathcal{U}$ if we have confidence in the accuracy of the non-thinking model $\tilde{f}$ and the quality of uncertainty score. This can result in a large $\hat{u}_t$, which then improves efficiency.
However, in most cases, B-PAC reasoning with \eqref{bayes_PAC} is less efficient than the one with fixed sequence testing \eqref{threshold} since $0<\nu(u)\leq1$, which also reflects a fundamental safety-efficiency trade-off.

\subsection{Safety for Non-Stationary Data}
Let $(\mathcal{F}_t)_{t\geq0}$ be the filtration defined by \eqref{filtration}. For $u\in\mathcal{U}$, define the weighted cumulative risk at time $t$ by
\begin{equation}\label{weighted_risk}
L_t(u)=\sum_{j=1}^t\frac{\lambda_j(u)}{\sum_{i=1}^{t}\lambda_i(u)}\mathbb{E}[l_j\mathbb{I}\{\xi_j(u)=0\}|\mathcal{F}_{j-1}],
\end{equation}
where $\xi_j(u)\sim \text{Bernoulli}(\mathbb{I}\{U_j \geq u\}+\rho_t\mathbb{I}\{U_j<u\})$.
For fixed betting strategies, it degenerates into non-weighted risk.
For the i.i.d. case, we have $L_t(u)=R_t(u)$. 
The following theorem demonstrates the safety of B-PAC reasoning for non-stationary data.

\begin{theorem}[Safety]\label{safety_non_stationary}
Consider that $(X_t)_{t\geq0}$ is an arbitrary data stream.
Let $(K_t(u))_{t\geq0}$ and $\hat{u}_t$ defined by \eqref{profit} and \eqref{bayes_PAC}, respectively.
For any prior probability mass $\nu$ with domain $\mathcal{U}$, any $\alpha\in (0,1)$, $\epsilon\in(0,1)$, and any $\rho_t\in(0,1)$, $\lambda_t\in\mathcal{F}_{t-1}$ with $1+\lambda_t D_t(u)\geq0$, we have
\begin{equation*}
\mathbb{P}(\forall t\in \mathbb{N}: L_t(\hat{u}_t)\leq \epsilon)\geq 1-\alpha.
\end{equation*}
\end{theorem}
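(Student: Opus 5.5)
The plan is to combine three ingredients: a per-threshold nonnegative supermartingale that does not need stationarity, a union bound weighted by the prior $\nu$, and a deterministic implication from ``wealth is large'' to ``weighted risk is at most $\epsilon$''. Fix $u\in\mathcal{U}$ and define the compensated process
\begin{equation*}
M_t(u)=\prod_{j=1}^t\bigl(1+\lambda_j(u)(\epsilon-\mathbb{E}[Z_j(u)\mid\mathcal{F}_{j-1}])\bigr)^{-1}\cdots
\end{equation*}
I will not use this directly. Instead I will follow the standard test-by-betting route.

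\textbf{Conditional mean of the IPS estimator.} Conditionally on $\mathcal{F}_{j-1}$, the quantities $\pi_j$ and $\xi_j$ have $\mathbb{E}[\xi_j\mid X_j,\mathcal{F}_{j-1}]=\pi_j$. Hence
\begin{equation*}
\mathbb{E}[Z_j(u)\mid\mathcal{F}_{j-1}]=(1-\rho_{\min})\,\mathbb{E}[l_j\mathbb{I}\{U_j<u\}\mid\mathcal{F}_{j-1}].
\end{equation*}
This quantity is at least $\mathbb{E}[l_j\mathbb{I}\{\xi_j(u)=0\}\mid\mathcal{F}_{j-1}]=(1-\rho_j)\mathbb{E}[l_j\mathbb{I}\{U_j<u\}\mid\mathcal{F}_{j-1}]$, because $\rho_j\ge\rho_{\min}$; this is the conditional analogue of Lemma~\ref{equv}. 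Write $r_j(u)$ for this conditional risk.

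\textbf{Supermartingale.} Define $S_t(u)=K_t(u)$ divided by nothing; instead use $\tilde K_t(u)=\prod_{j\le t}(1+\lambda_j(u)(D_j(u)-(\epsilon-r_j(u)))\cdot)$. The cleaner route is the inequality $\log(1+x)\le x$, which gives
\begin{equation*}
\log K_t(u)\le\sum_{j\le t}\lambda_j(u)D_j(u).
\end{equation*}
That bound is not a martingale statement, so I will use the process
\begin{equation*}
W_t(u)=K_t(u)\prod_{j\le t}\frac{1}{1+\lambda_j(u)(\epsilon-r_j(u))}.
\end{equation*}
Because $\lambda_j$ is predictable, $\mathbb{E}[1+\lambda_jD_j\mid\mathcal{F}_{j-1}]\le1+\lambda_j(\epsilon-r_j)$, so $W_t(u)$ is a nonnegative supermartingale with $W_0=1$ for an arbitrary stream; this is the same computation as in Lemma~\ref{supermartingale_property}. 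There is a degenerate case when $1+\lambda_j(\epsilon-r_j)\le0$. It cannot arise, since $r_j\le(1-\rho_{\min})\cdot$ bounded and $\lambda_j$ is capped by $c/M_j$, but this is a technical point I would verify.

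\textbf{Union bound and the deterministic step.} Ville's inequality gives $\mathbb{P}(\exists t: W_t(u)\ge1/(\alpha\nu(u)))\le\alpha\nu(u)$. A union bound over $u$ then yields probability at most $\alpha$ that any $u$ ever crosses its level. The key deterministic step runs as follows. On the complementary event, take any $t$ with $K_t(u)\ge1/(\alpha\nu(u))$. Then
\begin{equation*}
\prod_j\bigl(1+\lambda_j(\epsilon-r_j)\bigr)>1,
\end{equation*}
so $\sum_j\log(1+\lambda_j(\epsilon-r_j))>0$. Using $\log(1+x)\le x$, this gives $\sum_j\lambda_j(\epsilon-r_j)>0$, which is exactly $L_t(u)<\epsilon$ with the weights of \eqref{weighted_risk}. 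Since $\hat u_t$ from \eqref{bayes_PAC} is always some $u$ satisfying the crossing condition, or $0$ with $L_t(0)=0$, we get $L_t(\hat u_t)\le\epsilon$ for all $t$ simultaneously.

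\textbf{Main obstacle.} The main obstacle is setting up the first step correctly. I need the conditional expectation of $Z_j$ to dominate the conditional risk that appears in $L_t$ (with the time-varying $\rho_j$ versus $\rho_{\min}$), and I need to make sure conditioning on $\mathcal{F}_{j-1}$ is legitimate when $X_j$ is adversarial. The randomness used in the IPS step is only the internal coin $\xi_j$, so I would condition on $X_j$ first and then tower.
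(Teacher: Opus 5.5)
Your argument is correct and follows the same outline as the paper: divide $K_t(u)$ by a predictable compensator, aggregate over $\mathcal{U}$ with the weights $\nu$, then use $\log(1+x)\le x$ to turn a wealth crossing into $L_t(u)<\epsilon$. It differs in the aggregation step. You apply Ville's inequality to each supermartingale $W_t(u)$ at level $1/(\alpha\nu(u))$ and take a weighted union bound, $\sum_u\alpha\nu(u)=\alpha$. The paper instead forms the single mixture $\sum_u\nu(u)Y_t(u)$ of the martingales $Y_t(u)=K_t(u)/H_t(u)$ and applies Ville once at level $1/\alpha$. Since each individual crossing $Y_t(u)\ge 1/(\alpha\nu(u))$ already forces the mixture above $1/\alpha$, both routes give exactly $\alpha$ here. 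The mixture is a single e-process, which would matter only for testing or stopping on the aggregate. The selection rule uses per-threshold crossings only, so your more elementary Bonferroni argument loses nothing.

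Your compensator $1+\lambda_j(\epsilon-r_j)$ also builds in the inequality $\mathbb{E}[D_j\mid\mathcal{F}_{j-1}]=\epsilon-\frac{1-\rho_{\min}}{1-\rho_j}r_j(u)\le\epsilon-r_j(u)$. The paper's step (i) states this as an equality, although it is strict when $\rho_j>\rho_{\min}$ and $r_j(u)>0$; this is harmless there because $\lambda_i\ge0$. You also treat the fallback $\hat u_t=0$ explicitly, which the paper omits. The case $\sum_j\lambda_j(u)=0$ is excluded automatically on the crossing event, because the product then equals $1$.

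One repair is needed. Your reason why $1+\lambda_j(\epsilon-r_j)$ cannot vanish appeals to the cap $c/M_j$, which the theorem does not assume. Use instead $1+\lambda_j(\epsilon-r_j)\ge\mathbb{E}[1+\lambda_jD_j\mid\mathcal{F}_{j-1}]\ge0$, valid for $\lambda_j\ge0$. If equality to $0$ occurs, then $1+\lambda_jD_j=0$ almost surely, so $K_t(u)$ is absorbed at $0$, and a convention such as $0/0=0$ handles it; the paper's $Y_t$ has the same degenerate case. Finally, remove the abandoned definitions of $M_t(u)$, $S_t(u)$ and $\tilde K_t(u)$ from the write-up.
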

Theorem~\ref{safety_non_stationary} can be regarded as a safety guarantee for the worst cases, since it holds for any types of non-stationary data. This implies that a tighter bound may be derived under some mild assumptions about the data $(X_t)_{t\geq1}$. In practical deployment, we suggest simply employing the update rule given by \eqref{threshold} to gain high efficiency, since empirically it also leads to valid risk control for non-stationary data; see Figures \ref{fig:qwen3ins_bbh_magpie} and \ref{fig:placeholder}.

\begin{figure*}[ht]
\vskip 0.2in
\centering
    \includegraphics[width=0.9\textwidth]{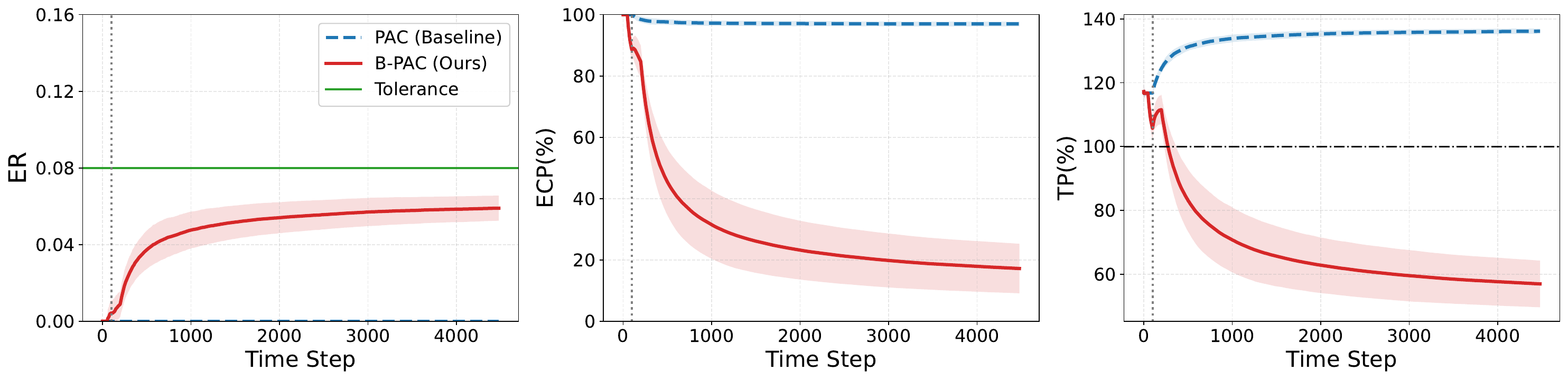}
    \caption{\textbf{Efficiency outperforms offline PAC reasoning}. ER, ECP, and TP are reported on a combined dataset of Magpie and BBH, with $\epsilon=0.08$ and $\alpha=0.1$. The vertical dotted line indicates the size of the calibration set used for the offline PAC baseline. Experiments are repeated 100 times, and the shaded areas represent standard deviations.}
    \label{fig:qwen3ins_bbh_magpie}
\end{figure*}

\begin{figure*}[ht]
\vskip 0.2in
\centering
    \includegraphics[width=0.9\textwidth]{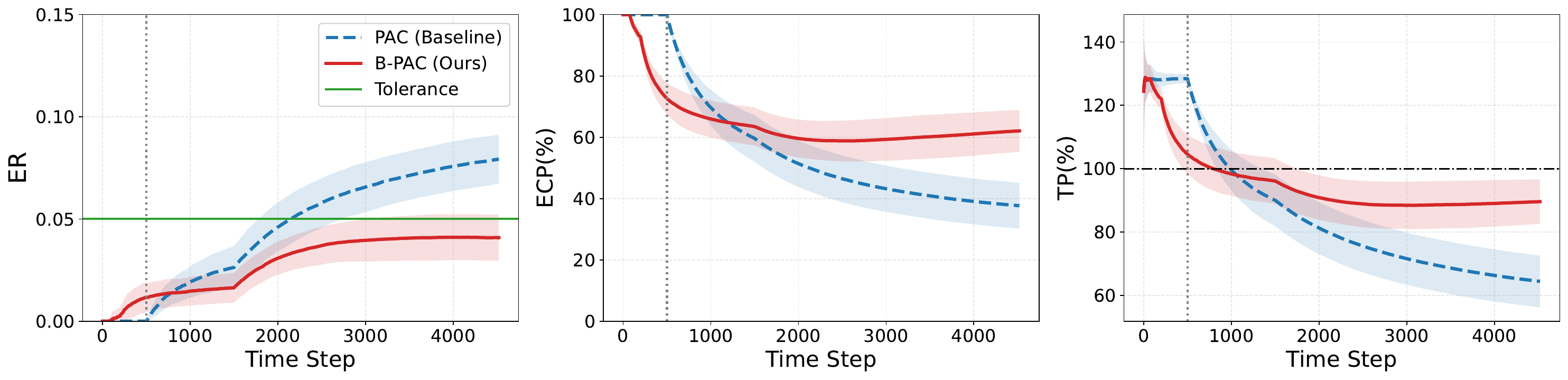}
    \caption{\textbf{Anytime safety for non-stationary data}.
    Results are reported on a combined dataset of MMLU-Pro and BBH, with $\epsilon=0.05$ and $\alpha=0.1$.}
    \label{fig:placeholder}
\end{figure*}

\begin{figure*}[ht]
\centering

\begin{subfigure}{\textwidth}
  \centering
  \includegraphics[width=0.9\textwidth]{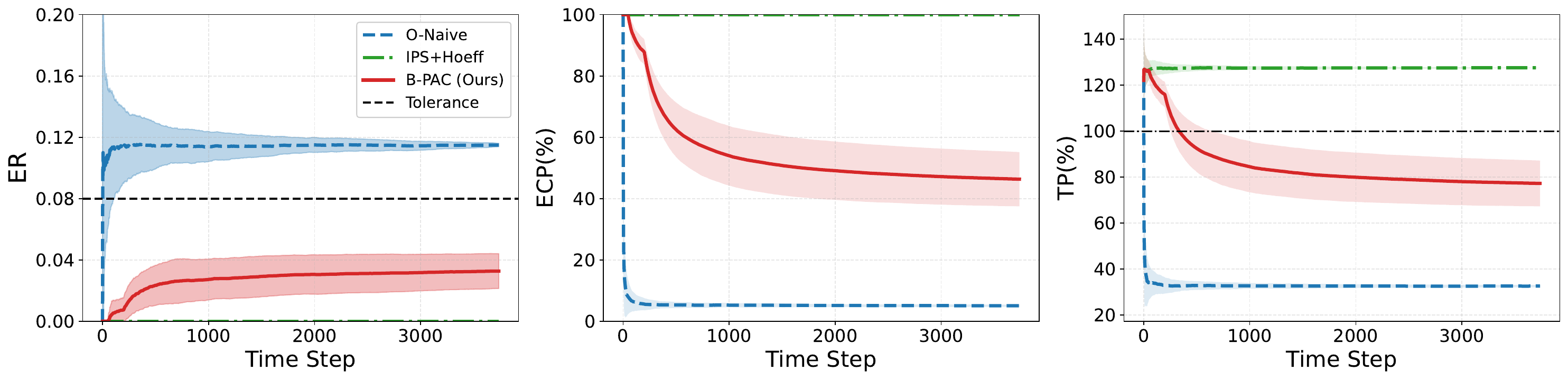}
  \caption*{(a) Results on MMLU-Pro.}
  \label{exp:mmlupro}
\end{subfigure}

\begin{subfigure}{\textwidth}
  \centering
  \includegraphics[width=0.9\textwidth]{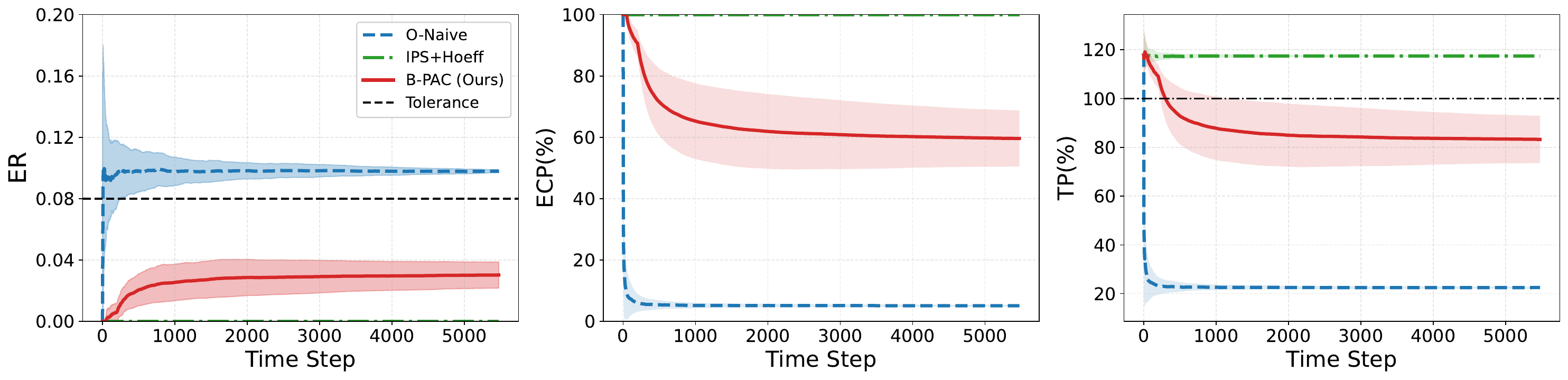}
  \caption*{(b) Results on BBH.}
  \label{exp:bbh}
\end{subfigure}

\caption{\textbf{Safety and efficiency of B-PAC reasoning compared with online methods, including IPS+Hoeff and O-Naive.}
Results are reported on MMLU-Pro and BBH benchmarks, with $\epsilon=0.08$ and $\alpha=0.1$.}
\label{exp:bpacvsips}

\vskip -0.2in
\end{figure*}

\section{Experiments}
In this section, we present the experimental results of B-PAC reasoning and other baselines on diverse benchmarks. We aim to demonstrate that B-PAC reasoning always controls risk with a specific confidence level for streaming data while significantly reducing computational overhead.
The reproduction codes can be found at \url{https://github.com/ChengyaoYu1/B-PAC-Reasoning}.

\subsection{Experimental Setup}
We introduce the main information about the experiments; detailed settings and additional results such as results on other uncertainty scores and ablation studies on the proposed strategies of $\lambda_t$ and $\rho_t$ are presented at Appendix \ref{exp:details} and \ref{additional_results}.
\paragraph{Datasets.}
We evaluate B-PAC reasoning across a diverse suite of benchmarks that encompass mathematical problem-solving, multi-task knowledge, symbolic reasoning, and open-ended instruction following. Specifically, we leverage four challenging datasets: MATH~\cite{hendrycks2021measuringmathematicalproblemsolving}, MMLU-Pro~\cite{wang2024mmluprorobustchallengingmultitask}, BIG-Bench Hard (BBH)~\cite{srivastava2023imitationgamequantifyingextrapolating}, and Magpie~\cite{xu2024magpiealignmentdatasynthesis}. To ensure computational feasibility given the scale of our evaluation, we use randomly sampled subsets of MMLU-Pro and Magpie in our experiments. 

\paragraph{Large language models.} We utilize the Qwen3 model family~\cite{yang2025qwen3}. Specifically, we employ \texttt{Qwen3-4B-Thinking-2507} as the \textit{thinking model} and \texttt{Qwen3-4B-Instruct-2507} as the \textit{non-thinking model}. We configure the sampling temperature and other hyperparameters as the settings in the original technical report.

\paragraph{Methods.} 
We first compare B-PAC with the offline method \textit{PAC reasoning}~\cite{zeng2025pac}. For both methods, we use logits-based uncertainty score.
We then compare B-PAC with two online efficient methods: \textit{O-naive} and \textit{IPS+Hoeff}, where O-naive has no safety guarantee and IPS+Hoeff provides a safety guarantee by leveraging Hoeffding's inequality~\cite{hoeffding1963probability}.
Other efficient methods including \textit{Chain of Draft} (CoD)~\cite{xu2025chain} and \textit{NoThinking}~\cite{ma2025reasoning} are also compared.

\paragraph{Loss functions.}
For verifiable tasks, we use 0-1 loss. For open-ended tasks, we use a loss function based on an LLM-as-a-judge. For dataset including multi-type tasks, we adopt a piecewise loss function.

\paragraph{Evaluation.}
We evaluate reasoning efficiency through two metrics: \textit{Expert Call Percentage} (ECP) and \textit{Token Percentage} (TP). For $t\geq1$, we define
\begin{align*}
    \text{ECP}_t &= \frac{1}{t}\sum_{i=1}^{t}\mathbb{I}\{\xi_i=1\}\times 100\%, \\
    \text{TP}_t &= \frac{\sum_{i=1}^t(\tilde{h}(X_i)+h(X_i)\mathbb{I}\{\xi_i=1\})}{\sum_{i=1}^th(X_i)}\times 100\%.
\end{align*}
where $\tilde{h}(x_i)$ and $h(x_i)$ represent the number of tokens of $\tilde{f}(x_i)$ and $f(x_i)$, respectively.
The safety is evaluated by the \textit{Empirical Risk} (ER):
\begin{equation*}
\text{ER}_{t}=\frac{1}{t}\sum_{i=1}^tl(\hat f_i(X_i),f(X_i)).
\end{equation*}
\begin{remark}[ECP vs TP]
With $h(X_i)/\tilde{h}(X_i)=S$, we have $\text{TP}_t=\text{ECP}_t+1/S$. Since $S$ depends on $(f,\tilde{f})$, we regard ECP as a more intrinsic metric for evaluating the efficiency of routing methods, as it isolates the decision-making quality from the specific model characteristics.
\end{remark}

\subsection{Results}

\paragraph{Efficiency outperforms offline method.}
Figure \ref{fig:qwen3ins_bbh_magpie} compares the performance of B-PAC against the offline PAC baseline. We observe that B-PAC consistently maintains the empirical risk below the tolerance $\epsilon$ while significantly reducing computational costs, achieving $\text{ECP}=18.99\%$ and $\text{TP}= 58.63\%$. In contrast, PAC exhibits negligible efficiency gains. This disparity arises because PAC relies on a fixed routing threshold derived from a calibration set from a harder dataset, BBH. Consequently, when the test stream introduces easier queries, the fixed threshold remains overly conservative. B-PAC, however, dynamically relaxes the threshold in response to the easier streaming data, thereby maximizing efficiency without compromising safety.

\paragraph{Risk control under non-stationary settings.}
Figure \ref{fig:placeholder} illustrates performance under a dynamic setting where query difficulty escalates over time. The offline PAC baseline fails to control risk, as its fixed threshold—calibrated on earlier, easier data—cannot adapt to the harder incoming queries. Conversely, B-PAC demonstrates superior robustness, achieving anytime-valid safety. This is attributed to B-PAC's ability to dynamically tighten the routing threshold as the error rate rises, effectively trading off necessary computation for strict adherence to the safety constraint.

\paragraph{Comparison with online methods.}
Figure \ref{exp:bpacvsips} benchmarks B-PAC against O-naive and IPS+Hoeff on MMLU-Pro and BBH, which highlights the critical challenges in online efficient reasoning. The failure of O-naive (risk violation) confirms that handling partial feedback is non-trivial and requires accurate estimation like IPS. Meanwhile, the inefficiency of IPS+Hoeff (near 100\% expert usage) demonstrates that standard inequalities are too loose for practical deployment.
B-PAC overcomes both limitations by leveraging IPS estimation with a tight betting supermartingale, achieving guaranteed safety without sacrificing efficiency. For example, for MMLU-Pro, B-PAC reasoning achieves $\text{ECP}=47.04\%$ and $\text{TP}=78.07\%$.

\paragraph{Reliability against heuristic approaches.}

\begin{table}[t] 
    \centering
    \caption{Results of B-PAC reasoning, CoD, and NoThinking on MATH and MMLU-Pro, with $\epsilon=0.08$ and $\alpha=0.1$.}
    \label{tab:CoD_NoThinking}
    \resizebox{\linewidth}{!}{ 
    \begin{tabular}{l ccc ccc}
        \toprule
        & \multicolumn{3}{c}{\textbf{MATH}} & \multicolumn{3}{c}{\textbf{MMLU-Pro}} \\
        \cmidrule(lr){2-4} \cmidrule(lr){5-7} %
        \textbf{Metric} & \textbf{B-PAC} & \textbf{CoD} & \textbf{NoThinking} & \textbf{B-PAC} & \textbf{CoD} & \textbf{NoThinking} \\
        \midrule
        \textbf{ER} $\downarrow$     
            & $\mathbf{0.03\pm0.01}$ & $0.1204$ & $0.1577$  
            & $\mathbf{0.03\pm0.01}$ & $0.10$ & $0.12$ \\            
            
        \textbf{TP (\%)} $\downarrow$ 
            & $\mathbf{68.16 \pm 14.56}$ & $95.63$ & $77.26$    
            & $77.24\pm 9.94$ & $\mathbf{73.63}$ & $76.52$ \\            
        \bottomrule
    \end{tabular}
    }
\end{table}

We extend our comparison to heuristic reasoning paradigms, specifically CoD and NoThinking; see Table~\ref{tab:CoD_NoThinking}.
The results reveal that heuristic baselines violate the risk tolerance $\text{ER}>0.08$, failing to provide safety guarantees. In contrast, B-PAC uniquely maintains risk control due to its rigorous theoretical foundation. Crucially, this safety does not come at the cost of efficiency; B-PAC achieves the lowest token consumption on the complex MATH dataset ($\text{TP}=68.16\%$) and remains competitive on MMLU-PRO, establishing it as the only reliable solution for high-stakes deployment.

\section{Related Work}

\paragraph{Anytime safe efficient reasoning.}
Although LRMs have demonstrated remarkable progress in tackling complex tasks~\cite{sprague2024cot,bai2025longbench}, the problem of overthinking makes the usage of LRMs expensive and even dangerous~\cite{chen2024not,sui2025stop,cuadron2025danger}.
This highlights the need to improve the inference efficiency of LRMs. 
By \citet{sui2025stop}, recent efficient reasoning methods include three categories: model-based efficient reasoning,
reasoning output-based efficient reasoning, and input prompts-based efficient reasoning. Among them, a promising direction is to switch between thinking and non-thinking models to save computational overhead~\cite{cheng2025think,fang2025thinkless,ma2025reasoning,li2025dynamicmind,liang2025thinkswitcher,paliotta2025thinking,chung2025thinker,pan2024dynathink,yong2025think}. However, these methods cannot control the performance loss caused by using a non-thinking model. The proposed B-PAC reasoning fills this gap, providing a model-agnostic, anytime safe method for improving reasoning efficiency.

Game-theoretic statistics, or more specifically, the non-negative (super)martingale and e-process provide fundamental tools for safe anytime-valid inference~\cite{shafer2021testing, ramdas2020admissible,ramdas2023game, chugg2023unified, waudby2024estimating}, where ``e'' of e-process refers to e-values~\cite{shafer2021testing, vovk2021values, yu2024generalized}. The main idea of these tools is to construct a powerful test (super)martingale (or e-process) and then leverage Ville's inequality~\cite{howard2020time}. Our work introduces these tools to efficient reasoning and takes additional techniques to address specific challenges in this field. First, we tackle the partial feedback by integrating the IPS estimator into the supermartingale construction. Second, for i.i.d. settings, we innovatively combine fixed sequence testing with Ville's inequality, which is more efficient than directly leveraging the tools of mixture martingales. Furthermore, we formulate the betting strategy as an online convex optimization problem. These contributions enable B-PAC reasoning to be an anytime safe and efficient reasoning method.

\paragraph{Risk control and PAC reasoning.}
Providing model-agnostic risk control for black-box models has become a prominent research direction in recent years, such as methods of conformal prediction~\cite{vovk2005algorithmic, bates2021distribution,angelopoulos2022conformal,angelopoulos2023conformal}, Learn then Test (LTT)~\cite{angelopoulos2025learn}, and a predictive inference style for PAC learning~\cite{valiant1984theory,candes2025probably,zeng2025pac}. The key insight of these methods lies in leveraging a hold-out calibration set to estimate the empirical quantile (or other statistics) of the risk distribution, thereby determining a fixed threshold that guarantees the level of errors on the test data remains within a user-specified level. Among these methods, the most related work is the PAC reasoning~\cite{zeng2025pac}, which first applies the LTT to improve the efficiency of reasoning models. Specifically, PAC reasoning determines a fixed switching threshold on a calibration set by constructing an upper confidence bound (UCB) on the performance loss and leveraging LTT~\cite{angelopoulos2025learn,candes2025probably}.
Our proposed B-PAC reasoning fundamentally differs from \cite{zeng2025pac} in two aspects:
\begin{itemize}
\item[1.] \citet{zeng2025pac} target offline settings with i.i.d calibration data. In contrast, B-PAC is designed for online settings with partial feedback, does not require access to offline data, and is applicable to non-stationary data.
\item[2.] \citet{zeng2025pac} depend on the tools of UCB and LTT. In contrast, to tackle partial feedback and online settings, B-PAC reasoning is mainly built on the construction of the IPS estimator and supermartingale.
This also leads to a distinct safety guarantee.
\end{itemize}

\section{Conclusion}
We propose B-PAC reasoning, a rigorous method designed to bridge the gap between reasoning efficiency and safety for the online deployment of LRMs. Specifically, B-PAC reasoning switches a query to the thinking mode only when its uncertainty level exceeds the current threshold, where the determination of the time-varying threshold is formulated as a betting game. Unlike existing heuristic-motivated and offline selective thinking methods, B-PAC ensures anytime-valid performance loss control even under the challenging settings of partial feedback and non-stationary data.
We further establish the efficiency of our proposed adaptive betting strategy, by proving it achieves logarithmic regret. Extensive experiments demonstrate that B-PAC reasoning can significantly reduce computational costs and control the risk below the user-specified level. 
We expect that the proposed method can serve as a basic tool in the area of safe efficient reasoning.

\paragraph{Broader applications.}
The proposed B-PAC is model-agnostic: no assumptions are imposed for $f$, $\tilde{f}$, and the uncertainty score $U$. Therefore, it is promising to apply B-PAC to other routing systems with a cost-accuracy trade-off.

\paragraph{Limitation.}
Although B-PAC reasoning provides the first theoretically grounded method for safe efficient reasoning, several challenges remain to be solved to expand its usability. First, the present B-PAC reasoning switches between two models, extending it to multiple models remains unsolved. Second, the efficiency of B-PAC relies on the quality of uncertainty scores. When multiple scores are available, it is unknown how to adaptively select the most reliable score. Third, designing the conditional case~\cite{zeng2025note} of B-PAC reasoning may further enhance the efficiency.

\section*{Acknowledgements}
We thank the anonymous reviewers for their valuable comments and suggestions.
This research is supported by the SUSTech-NUS Joint Research Program.

\section*{Impact Statement}
This paper presents work whose goal is to advance the field of Machine
Learning. There are many potential societal consequences of our work, none
which we feel must be specifically highlighted here.


\bibliography{example_paper}
\bibliographystyle{icml2026}

\newpage
\appendix
\onecolumn
\section{Deferred Discussions}\label{der_discussion}

\subsection{Rationality for Controlling Performance Loss with respect to Thinking Model}\label{appendix:rationality_ground_truth}

Recall that the primary goal of B-PAC reasoning is to control the performance loss with respect to the thinking model $f(X_t)$, rather than the ground truth $Y_t$.
While controlling risk relative to the ground truth is an ultimate goal for reliable AI, we justify the rationality and necessity of the goal of B-PAC reasoning from the following three perspectives.

\paragraph{Relationship with controlling performance loss with respect to ground truth.} 

The performance loss of an efficient reasoning model $\hat f_t$ relative to the ground truth $Y_t$ can be decomposed into two components:
\begin{equation*}
    \underbrace{l(\hat{f}_t(X_t), Y_t)}_{\text{Total Error}} \le \underbrace{l(\hat{f}_t(X), f(X_t))}_{\text{Approximation Error}} + \underbrace{l(f(X_t), Y_t)}_{\text{Inherent Error}}.
\end{equation*}
The \textit{approximation error} stems from using the non-thinking model $\tilde{f}$ instead of the thinking model $f$, which is introduced by the efficiency mechanism and is exactly what B-PAC reasoning aims to control.
The \textit{inherent error} stems from the limitations of the thinking model $f$ itself. Reducing the inherent error falls within the scope of model alignment or pretraining, which is orthogonal to the problem of efficient reasoning.
Furthermore, we assume that the difference between the output of the thinking model and the ground truth is controllable, i.e., $l(f(X_t),Y_t)\leq \delta$ for some small $\delta>0$. Then by controlling the performance loss with respect to the thinking model $f(X_{t})$ below $\epsilon$, we have $l(\hat{f}_t(X_t),Y_t)\leq\epsilon+\delta$, which means that the performance loss of B-PAC reasoning with respect to the ground truth is also controlled.

\paragraph{Objective of efficient reasoning.} 
The primary objective of this work is to improve the \textit{reasoning efficiency} of LRMs, not to improve the \textit{capability} of thinking model. The premise of deploying a LRM is that users have already trusted its performance and are willing to pay for its superior reasoning capabilities.
In this context, the goal of an efficient reasoning system is to approximate the behavior of the thinking model \textit{as closely as possible} with minimal computational cost. Therefore, the performance loss in our setting should be defined by the fidelity gap between the composite model and the thinking model, rather than the inherent error of the thinking itself.

\paragraph{Unavailability of ground truth in online settings.} 
In real-world online deployment such as open-ended question and code generation, the ground truth label $Y_t$ is typically unobservable, or requires prohibitively expensive human annotation. In contrast, the output of the thinking model $f(X_t)$ is \textit{observable} if $\xi_t=1$.
These observable errors enable us to estimate the corresponding population risk (e.g., the proposed IPS estimator), and therefore controls its population level below the preset level with a high probability.
If we aim to control the risk with respect to the ground truth, then we must have access to the ground truth, which is \textit{impractical} for online generation tasks where immediate human oversight is absent.

\subsection{Impact of the Quality of Uncertainty Score}
We provide a detailed discussion on the impact of uncertainty-score quality on the safety and efficiency of B-PAC reasoning. 

\paragraph{Safety of B-PAC reasoning.}
Theorem \ref{anytime_valid_guarantee} and Theorem \ref{safety_non_stationary} hold regardless of how $U_t$ is generated, therefore, the safety guarantees of B-PAC reasoning are entirely independent of the quality of the uncertainty scores. This means that even in high-stakes applications, users do not need to worry about the safety of using uncertainty score with potential low quality.
Such flexibility enables users to choose different types of bounded uncertainty scores by their domain knowledge or prior information.
For example, users can use logit-based scores derived from the token probabilities of the non-thinking model, or use the verbalized score generated by prompting the LLM to self-evaluate its confidence.

\paragraph{Efficiency of B-PAC reasoning.}
Although the safety of B-PAC reasoning is guaranteed regardless of uncertainty score, the reasoning efficiency is heavily dependent on the discriminative power of the uncertainty score.
Generally speaking, when the uncertainty score can reflect, to some extent, the inconsistency between the output of the non-thinking model and the output of the thinking model (positive correlation), B-PAC reasoning will gain efficiency.
The reasons are as follows. 

The efficiency gain of B-PAC comes from its ability to identify the non-trivial safe threshold $\hat{u}_t > 0$. This requires the wealth $K_t(\hat u_t)$ to exceed a certain level. A high-quality $U_t$ ensures that the empirical risk is consistently lower than $\epsilon$ for a range of thresholds, allowing the gambler to win money and drive $\hat{u}_t$ upward.
If the uncertainty score is of poor quality (e.g., uncorrelated with the actual error), the gambler will frequently lose games, causing the wealth $K_t(u)$ to shrink or stagnate.
To illustrate this clearly, we consider two extreme scenarios. Consider $\epsilon=0.05,\rho_t=0.1$, and the non-thinking model $\tilde{f}$ has a raw error rate of 0.2. Consider that $U_t$ perfectly distinguishes correct and incorrect outputs that $U_t=\mathbb{I}\{\tilde f(X_t)\neq f(X_t)\}$.
In this scenario, for any threshold $u \in (0, 1)$, the set of accepted queries $\{X_t: U_t < u\}$ consists exclusively of correct instances. Consequently, the empirical risk within this accepted group is $0$. Therefore, the gambler always wins the game ($D_t(u)>0$), causing the wealth $K_t(u)$ to grow exponentially for all $u$. As a result, B-PAC will identify and maintain a high threshold (e.g., $\hat{u}_t \approx 1$), routing nearly all easy queries to the non-thinking model and achieving maximum efficiency. In contrast, for the adversarial score $U_t=\mathbb{I}\{\tilde f(X_t)= f(X_t)\}$, the gambler will lose all money quickly.

\subsection{Engineering Considerations}
\label{sec:engineering_app}
In this subsection, we discuss the practical aspects of integrating B-PAC reasoning into real-world LLM serving systems.

\paragraph{Negligible latency overhead.} 
Briefly, the total system-level computational cost of B-PAC for 1000 requests is only 0.046s. This is because updating does not require model retraining, gradient updates, or heavy matrix multiplications. We only maintain quantities over a discretized threshold grid, which just involves computing the IPS estimator and performing scalar multiplications per step. Compared to the substantial inference latency of LRMs that ranges from hundreds of milliseconds to seconds per query, the time cost of updating the threshold $\hat{u}_t$ is negligible.

\paragraph{Asynchronous state management.}
In high-concurrency scenarios, strict synchronization of the wealth process $K_t$ across all incoming requests may introduce lock contention. To address this, B-PAC reasoning can be implemented in an asynchronous manner. The routing decision can read the current threshold snapshot from a shared cache, while the wealth update runs in a background thread or a separate microservice upon receiving feedback. 

\paragraph{Scalability via distributed betting.}
For large-scale services distributed across multiple clusters, maintaining a single global wealth process might be challenging. A practical engineering solution is to maintain sharded wealth processes, where different traffic segments (e.g., grouped by user tiers or domain topics) maintain their independent betting games. This not only solves the scalability bottleneck but also allows B-PAC to learn fine-grained routing policies tailored to specific data distributions (e.g., a ``Math'' betting process vs. a ``Creative Writing'' betting process), further enhancing overall efficiency.

\section{Technical Proofs}\label{appendix:proof}

\subsection{Proof of Preliminary Lemma}

To avoid ambiguity, let $\pi_t(u)=\mathbb{I}\{U_t\geq u\}+\rho_t \mathbb{I}\{U_t <u\}$ and $\xi_t(u)\sim \text{Bernoulli}(\pi_t(u))$.

\begin{lemma}\label{equv}
Let $r_t(u)=\mathbb{E}[l_t\mathbb{I}\{\xi_t(u)=0\}|\mathcal{F}_{t-1}]$, and $D_t(u)$ defined by \eqref{profit}.
We have
\begin{equation*}
r_t(u)=(1-\rho_t)\mathbb{E}[l_t\mathbb{I}\{U_t<u\}|\mathcal{F}_{t-1}],
\end{equation*}
\begin{equation*}
\mathbb{E}[D_t(u)|\mathcal{F}_{t-1}] = \epsilon-(1-\rho_{\text{min}})\mathbb{E}[l_t\mathbb{I}\{U_t<u\}|\mathcal{F}_{t-1}]=\epsilon-\frac{1-\rho_{\text{min}}}{1-\rho_t}r_t(u).
\end{equation*}
As a result, if $(X_t)_{t\geq 0}$ are i.i.d, we have $\mathbb{E}[D_t|\mathcal{F}_{t-1}]=\epsilon-\mathbb{E}[(1-\rho_{\text{min}})l_t\mathbb{I}\{U_t<u\}]$ and $R_t(u)=(1-\rho_t)\mathbb{E}[l_t\mathbb{I}\{U_t<u\}]$.
Furthermore, for i.i.d. setting with $(\rho_t)_{t\geq1}$ given by \eqref{eq:rho_strategy}, we have
\[
R_t(u)=\frac{1-\rho_{\text{warm}}}{1-\rho_{\text{deploy}}}R(u)\mathbb{I}(t\leq T_{\text{warm}})+R(u)\mathbb{I}(t >T_{\text{warm}}),
\]
\[
\mathbb{E}[D_t(u)|\mathcal{F}_{t-1}]=\epsilon-R(u).
\]

\end{lemma}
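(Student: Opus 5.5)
The statement to prove is Lemma~\ref{equv}. The plan is to condition in two layers: first on $\mathcal{F}_{t-1}$ together with $(X_t,U_t)$, and only then take the outer conditional expectation given $\mathcal{F}_{t-1}$.

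\textbf{First identity.} The key modelling fact is that $\xi_t(u)$ is drawn as a fresh Bernoulli$(\pi_t(u))$ variable. Conditionally on $\mathcal{F}_{t-1}$ and $(X_t,U_t)$ it is independent of $l_t$, because $l_t=l(\tilde f(X_t),f(X_t))$ is a function of $X_t$. Hence
\[
\mathbb{E}[l_t\mathbb{I}\{\xi_t(u)=0\}\mid \mathcal{F}_{t-1},X_t,U_t]=l_t(1-\pi_t(u)).
\]
Since $1-\pi_t(u)=(1-\rho_t)\mathbb{I}\{U_t<u\}$, and $\rho_t$ is deterministic under \eqref{eq:rho_strategy}, the tower property gives $r_t(u)=(1-\rho_t)\mathbb{E}[l_t\mathbb{I}\{U_t<u\}\mid\mathcal{F}_{t-1}]$.

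\textbf{Second identity.} This one uses the actual algorithm's $\xi_t\sim\mathrm{Bernoulli}(\pi_t)$, where $\pi_t=\pi(U_t;\hat u_{t-1},\rho_t)$. The threshold $\hat u_{t-1}$ is $\mathcal{F}_{t-1}$-measurable, so $\pi_t$ is a function of $U_t$ and $\mathcal{F}_{t-1}$. Also $\pi_t\ge\rho_t>0$, so the ratio $\xi_t/\pi_t$ is well defined. Conditioning as before,
\[
\mathbb{E}[\xi_t/\pi_t\mid\mathcal{F}_{t-1},X_t,U_t]=1,
\]
so $\mathbb{E}[Z_t(u)\mid\mathcal{F}_{t-1}]=(1-\rho_{\min})\mathbb{E}[l_t\mathbb{I}\{U_t<u\}\mid\mathcal{F}_{t-1}]$. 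Subtracting this from $\epsilon$ gives the first form of $\mathbb{E}[D_t(u)\mid\mathcal{F}_{t-1}]$. Substituting $\mathbb{E}[l_t\mathbb{I}\{U_t<u\}\mid\mathcal{F}_{t-1}]=r_t(u)/(1-\rho_t)$ from the first identity gives the second form.

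\textbf{Main obstacle.} The step needing the most care is justifying that $\xi_t$ is conditionally independent of $l_t$ given $(\mathcal{F}_{t-1},X_t,U_t)$. This holds only because the Bernoulli draw uses internal randomness independent of everything else. It is also why we must condition on $X_t$ and not only on $\mathcal{F}_{t-1}$: $\mathcal{F}_t$ contains $l_t\xi_t$ but not $l_t$ itself. I would state this independence explicitly as the model assumption.

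\textbf{i.i.d.\ consequences.} When the $X_t$ are i.i.d., $(X_t,U_t,l_t)$ is independent of $\mathcal{F}_{t-1}$. The conditional expectations therefore reduce to unconditional ones, namely $\mathbb{E}[l_t\mathbb{I}\{U_t<u\}]$, which does not depend on $t$. Taking expectations of $r_t(u)$ then gives $R_t(u)=(1-\rho_t)\mathbb{E}[l_t\mathbb{I}\{U_t<u\}]$.

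\textbf{Two-stage schedule.} Plugging in \eqref{eq:rho_strategy} with $\rho_{\min}=\rho_{\text{deploy}}$, and using $R(u)=(1-\rho_{\text{deploy}})\mathbb{E}[l_t\mathbb{I}\{U_t<u\}]$, gives:
\begin{itemize}
\item $R_t(u)=\frac{1-\rho_{\text{warm}}}{1-\rho_{\text{deploy}}}R(u)$ for $t\le T_{\text{warm}}$;
\item $R_t(u)=R(u)$ for $t>T_{\text{warm}}$;
\item $\mathbb{E}[D_t(u)\mid\mathcal{F}_{t-1}]=\epsilon-R(u)$ for all $t$.
\end{itemize}
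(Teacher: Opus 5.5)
Your proposal is correct and follows essentially the same route as the paper. In both, you condition on $\mathcal{F}_{t-1}$ together with the time-$t$ query and replace $\xi_t$ (respectively $\mathbb{I}\{\xi_t(u)=0\}$) by its conditional mean $\pi_t$ (respectively $1-\pi_t(u)=(1-\rho_t)\mathbb{I}\{U_t<u\}$); you then apply the tower property and specialize to the i.i.d.\ and two-stage cases, while also making explicit two assumptions the paper's proof uses only implicitly: the Bernoulli draw is independent internal randomness, and the $\rho_t$ schedule is deterministic, so that $\rho_t$ and $\rho_{\min}$ can be pulled out of the conditional expectations.
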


\begin{proof}[Proof of Lemma \ref{equv}]
For $D_t(u)$, we have
\begin{align*}
\mathbb{E}\left[\epsilon-Z_t(u) \Big|\mathcal{F}_{t-1}\right]
&=\mathbb{E}\left[\epsilon-\frac{(1-\rho_{\text{min}})l_t\xi_t\mathbb{I}\{U_t<u\}}{\pi_t} \Big|\mathcal{F}_{t-1}\right]\\
&\overset{(i)}{=} \epsilon-\mathbb{E}\left[\frac{(1-\rho_{\text{min}})l_t\mathbb{I}\{U_t < u\}}{\pi_t}\mathbb{E}\left[\xi_t|\mathcal{F}_{t-1},X_t\right]\Big|\mathcal{F}_{t-1}\right]\\
&=\epsilon-\mathbb{E}\left[\frac{(1-\rho_{\text{min}})l_t\mathbb{I}\{U_t <u\}}{\pi_t} \pi_t \Big|\mathcal{F}_{t-1}\right]\\
&=\epsilon-(1-\rho_{\text{min}})\mathbb{E}[l_t\mathbb{I}\{U_t<u\}|\mathcal{F}_{t-1}],
\end{align*}
where $(i)$ holds since $\pi_t$ is a deterministic function given $\hat{u}_{t-1}\in\mathcal{F}_{t-1}$ and $X_t$.
For $r_t(u)$, we have
\begin{align*}
r_t(u)
&=\mathbb{E}\left[l_t\mathbb{I}\{\xi_t(u)=0\}|\mathcal{F}_{t-1}\right]\\
&=\mathbb{E}[l_t\mathbb{I}\{U_t <u\}\mathbb{I}\{\xi_t(u)=0\}|\mathcal{F}_{t-1}]\\
&=\mathbb{E}\left[l_t\mathbb{I}\{U_t<u\}\mathbb{E}[\mathbb{I}\{\xi_t(u)=0\}|\mathcal{F}_{t-1},X_t] | \mathcal{F}_{t-1}\right]\\
&=\mathbb{E}[l_t\mathbb{I}\{U_t< u\}(1-\pi_t(u))|\mathcal{F}_{t-1}]\\
&=\mathbb{E}[l_t\mathbb{I}\{U_t<u\}|\mathcal{F}_{t-1}]-\mathbb{E}[(\mathbb{I}\{U_t\geq u\}+\rho_t \mathbb{I}\{U_t <u\})l_t\mathbb{I}\{U_t < u\}|\mathcal{F}_{t-1}]\\
&=\mathbb{E}[l_t\mathbb{I}\{U_t<u\}|\mathcal{F}_{t-1}]-\rho_t \mathbb{E}[l_t\mathbb{I}\{U_t<u\}|\mathcal{F}_{t-1}]\\
&=(1-\rho_t)\mathbb{E}[l_t\mathbb{I}\{U_t <u\}|\mathcal{F}_{t-1}].
\end{align*}
Therefore, we also have
\[
\mathbb{E}[D_t(u)|\mathcal{F}_{t-1}]=\epsilon-\frac{1-\rho_{\text{min}}}{1-\rho_t}r_t(u).
\]
Furthermore, if $(X_t)_{t\geq0}$ are i.i.d., we have 
\[
\mathbb{E}[D_t(u)|\mathcal{F}_{t-1}]=\epsilon-(1-\rho_{\text{min}})\mathbb{E}[l_t\mathbb{I}\{U_t<u\}|\mathcal{F}_{t-1}]=\epsilon-(1-\rho_{\text{min}})\mathbb{E}[l_t\mathbb{I}\{U_t<u\}],
\]
\[
R_t(u)=\mathbb{E}[l_t\mathbb{I}\{\xi_t(u)=0\}]=r_t(u)=(1-\rho_t)\mathbb{E}[l_t\mathbb{I}\{U_t <u\}|\mathcal{F}_{t-1}]=(1-\rho_t)\mathbb{E}[l_t\mathbb{I}\{U_t <u\}].
\]
For $(\rho_t)_{t\geq1}$ given by \eqref{eq:rho_strategy}, we have $\rho_t=\rho_{\text{warm}}$ if $t\leq T_{\text{warm}}$ and $\rho_t=\rho_{\text{deploy}}$ if $t >T_{\text{warm}}$, and $\rho_{\text{min}}=\rho_{\text{deploy}}$.
This completes the proof.
\end{proof}

\begin{lemma}[Restatement of Lemma \ref{supermartingale_property}]\label{re:supermartingale_property}
Let $\mathcal{F}_t$, $D_t(u)$, and $\rho_t$ be defined by \eqref{filtration}, \eqref{profit}, and \eqref{eq:rho_strategy}, respectively.
Under the null hypothesis $H_{0,u}:R(u)>\epsilon$, for any nonnegative $\lambda_t(u)\in \mathcal{F}_{t-1}$ satisfying $1+\lambda_t(u)D_t(u)\geq0$, the process $(K_t(u))_{t\geq0}$~\eqref{wealth_process} is a non-negative supermartingale with respect to $\mathcal{F}_t$.
\end{lemma}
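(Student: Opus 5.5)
Our plan is to verify the three defining properties of a nonnegative supermartingale directly from the multiplicative structure \eqref{wealth_process}, using Lemma~\ref{equv} for the conditional mean of the one-step payoff. We will work in the i.i.d.\ setting with $(\rho_t)$ given by \eqref{eq:rho_strategy}, so that $\rho_{\min}=\rho_{\text{deploy}}$.

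\emph{Adaptedness and nonnegativity.} We first check that $K_t(u)$ is $\mathcal{F}_t$-measurable. By induction, $K_{t-1}(u)\in\mathcal{F}_{t-1}\subseteq\mathcal{F}_t$ and $\lambda_t(u)\in\mathcal{F}_{t-1}$. The factor $D_t(u)=\epsilon-Z_t(u)$ depends on $l_t\xi_t$, $U_t$, $\rho_t$ (deterministic), and on $\pi_t$, which is a function of $U_t$ and $\hat u_{t-1}\in\mathcal{F}_{t-1}$. Hence $D_t(u)\in\mathcal{F}_t$. Nonnegativity then follows by induction from $K_0(u)=1$ and the assumed constraint $1+\lambda_t(u)D_t(u)\ge 0$. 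Nonnegativity together with the supermartingale inequality below also gives integrability, since $\mathbb{E}K_t\le \mathbb{E}K_{t-1}\le\dots\le 1$. Alternatively, integrability follows from boundedness: $|D_t|\le \epsilon+(1-\rho_{\min})/\rho_t$ is finite because $\pi_t\ge\rho_t>0$.

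\emph{Supermartingale inequality.} Taking out what is known,
\[
\mathbb{E}[K_t(u)\mid\mathcal{F}_{t-1}]=K_{t-1}(u)\bigl(1+\lambda_t(u)\,\mathbb{E}[D_t(u)\mid\mathcal{F}_{t-1}]\bigr).
\]
Lemma~\ref{equv} gives, in the i.i.d.\ setting with \eqref{eq:rho_strategy},
\[
\mathbb{E}[D_t(u)\mid\mathcal{F}_{t-1}]=\epsilon-R(u).
\]
Under $H_{0,u}$ this is $<0$. Combined with $\lambda_t(u)\ge0$ and $K_{t-1}(u)\ge0$, this yields $\mathbb{E}[K_t(u)\mid\mathcal{F}_{t-1}]\le K_{t-1}(u)$.

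\emph{Main obstacle.} The key subtle step is the conditional expectation of the IPS term inside Lemma~\ref{equv}. We need $\mathbb{E}[\xi_t\mid\mathcal{F}_{t-1},X_t]=\pi_t$, and we need $X_t$, hence $U_t$ and $l_t$, to be independent of $\mathcal{F}_{t-1}$, so that the conditional mean equals the unconditional $\mathbb{E}[l_t\mathbb{I}\{U_t<u\}]$. Both hold because $\xi_t$ is drawn fresh given $\pi_t$, and because the i.i.d.\ stream makes $(X_t,\tilde f(X_t),f(X_t))$ independent of the past, assuming the models are fixed (or their internal randomness is fresh).

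The factor $(1-\rho_{\min})$ in \eqref{IPS_estimator} is exactly what makes the conditional drift equal to $\epsilon-R(u)$ at every $t$, including warm-up rounds where $\rho_t=\rho_{\text{warm}}$. The sign condition uses the deployment risk $R(u)$, not $R_t(u)$, which is why $H_{0,u}$ is stated in terms of $R(u)$.
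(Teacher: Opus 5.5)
Your proposal is correct and follows essentially the same route as the paper's proof. Both pull $K_{t-1}(u)$ and $\lambda_t(u)$ out of the conditional expectation by predictability, invoke Lemma~\ref{equv} to get $\mathbb{E}[D_t(u)\mid\mathcal{F}_{t-1}]=\epsilon-R(u)$, and conclude from $H_{0,u}$ with $\lambda_t(u)\ge 0$; your checks of adaptedness, nonnegativity, and integrability, and your remark on why the IPS term's conditional mean is unconditional in the i.i.d.\ setting, are details the paper leaves implicit.
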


\begin{proof}[Proof of Lemma \ref{re:supermartingale_property}]
\begin{align*}
\mathbb{E}[K_{t}(u)~|~\mathcal{F}_{t-1}]
&=\mathbb{E}\left[K_{t-1}(u) \cdot (1+\lambda_t(u)D_t(u))~|~\mathcal{F}_{t-1}\right]\\
&\overset{(i)}{=}K_{t-1}(u)\left(1+\lambda_t(u)\mathbb{E}[D_t(u)~|~\mathcal{F}_{t-1}]\right)\\
&\overset{(ii)}{=}K_{t-1}(u)\left(1+\lambda_t(u)
\left(\epsilon-R(u)\right)
\right)\\
&\overset{(iii)}{\leq} K_{t-1}(u),
\end{align*}
where $(i)$ holds since $\lambda_t(u)$ is predicable, $(ii)$ holds by Lemma \ref{equv}, and $(iii)$ holds by the null hypothesis $H_{0,u}$ that $R(u)> \epsilon$.
This completes the proof.
\end{proof}

\begin{lemma}[Ville's inequality]\label{ville}
For the non-negative supermartingale $\{K_t(u)\}_{t=0}^{\infty}$ with $K_0=1$, for any $\alpha\in(0,1)$, it holds that
\begin{equation*}
\mathbb{P}\left(\exists t: K_t(u)\geq \frac{1}{\alpha}\right)\leq \alpha.
\end{equation*}
\end{lemma}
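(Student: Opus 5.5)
The approach is the classical stopping-time argument for Ville's inequality, applied to the nonnegative supermartingale $(K_t(u))_{t\geq0}$. Fix $\alpha\in(0,1)$ and $u$, and define the stopping time $\tau=\inf\{t\geq 0: K_t(u)\geq 1/\alpha\}$, with $\inf\emptyset=\infty$. The event $\{\exists t: K_t(u)\geq 1/\alpha\}$ equals $\{\tau<\infty\}$, which is the increasing union over $n$ of $\{\tau\leq n\}$. By continuity of probability, it therefore suffices to show that $\mathbb{P}(\tau\leq n)\leq\alpha$ for every fixed $n\in\mathbb{N}$.

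\textbf{Step 1: the stopped process is a supermartingale.} Consider $K_{t\wedge\tau}(u)$. Its increment can be written as $K_{(t+1)\wedge\tau}-K_{t\wedge\tau}=\mathbb{I}\{\tau>t\}(K_{t+1}-K_t)$. The indicator $\mathbb{I}\{\tau>t\}$ is $\mathcal{F}_t$-measurable, and $\mathbb{E}[K_{t+1}-K_t\mid\mathcal{F}_t]\leq0$ by the supermartingale property of Lemma~\ref{supermartingale_property}. So the increment has nonpositive conditional expectation. Integrability holds because $K_t(u)$ is a finite product of bounded factors: each $D_t(u)$ is bounded, and for the concrete strategy \eqref{hyparameter} each $\lambda_t(u)$ is bounded. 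Iterating gives $\mathbb{E}[K_{n\wedge\tau}(u)]\leq K_0(u)=1$.

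\textbf{Step 2: Markov-type bound.} On $\{\tau\leq n\}$ we have $K_{n\wedge\tau}(u)=K_\tau(u)\geq1/\alpha$. On the complement, $K_{n\wedge\tau}(u)\geq0$ by nonnegativity. Hence
\[
\tfrac{1}{\alpha}\,\mathbb{P}(\tau\leq n)\leq \mathbb{E}\bigl[K_{n\wedge\tau}(u)\bigr]\leq 1,
\]
which gives $\mathbb{P}(\tau\leq n)\leq\alpha$. Letting $n\to\infty$ yields $\mathbb{P}(\tau<\infty)\leq\alpha$, which is the claim.

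The only delicate point is the integrability in Step 1 for a general predictable $\lambda_t$. If $\lambda_t$ is unbounded, I would avoid needing $\mathbb{E}[K_t]<\infty$ by using the generalized conditional expectation of the nonnegative variable $K_t$. The bound $\mathbb{E}[K_t\mid\mathcal{F}_{t-1}]\leq K_{t-1}$ then holds pathwise, and the tower property for nonnegative variables still gives $\mathbb{E}[K_{n\wedge\tau}]\leq1$ inductively. Apart from this, the argument is routine.
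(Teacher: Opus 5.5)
The paper gives no proof of this lemma. It quotes the classical Ville inequality (the main text cites Howard et al.) and uses it as a black box twice: for $K_t(u^*)$ in the proof of Theorem~\ref{anytime_valid_guarantee}, and for the mixture martingale $M_t=\sum_{u}\nu(u)Y_t(u)$ in the proof of Theorem~\ref{safety_non_stationary}. Your stopping-time argument is the standard textbook proof, and it is correct:
\begin{itemize}
\item $\tau$ is an $(\mathcal{F}_t)$-stopping time because $K_t(u)$ is adapted.
\item The stopped process satisfies $\mathbb{E}[K_{n\wedge\tau}(u)]\le 1$.
\item The Markov-type step gives $\mathbb{P}(\tau\le n)\le\alpha$.
\item Continuity from below finishes the argument.
\end{itemize}

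\textbf{Citing versus proving.} The paper's citation buys brevity. Your proof makes the result self-contained. It also shows that only three things are needed: nonnegativity, $K_0=1$, and the one-step conditional inequality.

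\textbf{Phrasing of Step 1.} Take advantage of that generality by invoking the lemma's own hypothesis (that $K$ is a nonnegative supermartingale), not Lemma~\ref{supermartingale_property}. That lemma applies only to the wealth process, and only under $H_{0,u}$. Phrased via the hypothesis, your argument covers both of the paper's applications. This includes $M_t$, where your bounded-factors integrability argument is unavailable, since the denominators $1+\lambda_i\mathbb{E}[D_i\mid\mathcal{F}_{i-1}]$ need not be bounded away from zero.

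\textbf{Integrability.} Your worry here is moot. A supermartingale is integrable by definition. Even if you only have $\mathbb{E}[K_t\mid\mathcal{F}_{t-1}]\le K_{t-1}$ for generalized conditional expectations of nonnegative variables, taking expectations inductively gives $\mathbb{E}[K_t]\le 1$ for all $t$. So no bound on $\lambda_t$ is needed.

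If you keep the generalized-expectation route, avoid the signed increment $\mathbb{I}\{\tau>t\}(K_{t+1}-K_t)$. Instead write $K_{(t+1)\wedge\tau}=\mathbb{I}\{\tau\le t\}K_{t\wedge\tau}+\mathbb{I}\{\tau>t\}K_{t+1}$, a sum of nonnegative terms. Conditioning on $\mathcal{F}_t$ then gives $\mathbb{E}[K_{(t+1)\wedge\tau}\mid\mathcal{F}_t]\le K_{t\wedge\tau}$ directly.
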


\begin{lemma}\label{sum_log}
Consider real number $a_n\geq0$, $1\leq n\leq T$. Let $S_{t}=\sum_{i=1}^t a_i+\beta$, where $\beta>0$, $1\leq t\leq T$. Set $S_0=\beta$. If $a_t\leq \gamma S_{t-1}$ holds for all $1\leq t\leq T$, then
\begin{equation*}
\sum_{t=1}^T\frac{a_t}{S_{t-1}}\leq C(\gamma)\log\left(\frac{S_T}{\beta}\right),
\end{equation*}
where $C(\gamma)=\gamma /\log(1+\gamma)$.
\end{lemma}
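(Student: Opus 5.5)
We prove this by comparing the sum with an integral of $1/x$, term by term. Fix $t$ and write $x_t = a_t/S_{t-1}\in[0,\gamma]$, which is allowed by the hypothesis $a_t\le\gamma S_{t-1}$ (note $S_{t-1}\ge\beta>0$, so the ratio is well defined). Since $S_t = S_{t-1}+a_t$, we have
\begin{equation*}
\log\frac{S_t}{S_{t-1}} = \log(1+x_t).
\end{equation*}
The right-hand side of the claim telescopes:
\begin{equation*}
\log\frac{S_T}{\beta}=\sum_{t=1}^T\log\frac{S_t}{S_{t-1}}=\sum_{t=1}^T\log(1+x_t),
\end{equation*}
using $S_0=\beta$. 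It therefore suffices to prove the scalar inequality $x\le C(\gamma)\log(1+x)$ for all $x\in[0,\gamma]$, and then sum over $t$.

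The scalar inequality is the only step with content. The function $h(x)=\log(1+x)/x$ (extended by $h(0)=1$) is nonincreasing on $(0,\infty)$, because $\log(1+x)$ is concave and vanishes at $0$, so its chords from the origin have decreasing slope. Hence for $x\in(0,\gamma]$ we get $h(x)\ge h(\gamma)=\log(1+\gamma)/\gamma = 1/C(\gamma)$, which rearranges to $x\le C(\gamma)\log(1+x)$. The case $x=0$ holds trivially.

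Applying this with $x=x_t$ for each $t$ and summing gives
\begin{equation*}
\sum_{t=1}^T\frac{a_t}{S_{t-1}}\le C(\gamma)\sum_{t=1}^T\log(1+x_t)=C(\gamma)\log\frac{S_T}{\beta}.
\end{equation*}
No step here is a real obstacle. The only points to check are that the uniform bound $x_t\le\gamma$ is exactly what makes the constant $C(\gamma)$ work, and that $S_{t-1}\ge\beta>0$ keeps every ratio well defined.
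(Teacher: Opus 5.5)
Your proof is correct and follows the paper's argument exactly: you telescope $\log(S_T/\beta)=\sum_{t}\log(1+a_t/S_{t-1})$ and reduce the claim to the scalar bound $x\le C(\gamma)\log(1+x)$ on $[0,\gamma]$. Your concavity (decreasing chord slope) argument for that scalar bound supplies the justification the paper leaves as ``it can be verified.''
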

\begin{proof}[Proof of Lemma \ref{sum_log}]
Note that if there exists a constant $C$ such that $a_t/S_{t-1}\leq C (\log S_t-\log S_{t-1})$ holds for all $t$, then
\[
\sum_{t=1}^{T}\frac{a_t}{S_{t-1}}\leq C(\log S_T -\log S_0)=C\log\left(\frac{S_T}{\beta}\right).
\]
For $t\geq1$,
\[
\log S_t-\log S_{t-1}=\log\left(\frac{S_t}{S_{t-1}}\right)=\log\left(1+\frac{a_t}{S_{t-1}}\right).
\]
Let $x_t=a_t/S_{t-1}\leq \gamma$. 
It remains to prove $x_t\leq C \log(1+x_t)$ for $x_t\in [0,\gamma]$ and $C=\gamma /\log(1+\gamma)$. It can be verified that
\[
\max_{0<x\leq\gamma} \frac{x}{\log (1+x)}=\frac{\gamma}{\log(1+\gamma)},
\]
which completes the proof.
\end{proof}

\subsection{Proof of Theorem \ref{anytime_valid_guarantee}}

\begin{proof}[Proof of Theorem \ref{anytime_valid_guarantee}]
Define the oracle threshold by
\begin{equation*}
u^*:=\min\left\{u\in\mathcal{U}:  R(u)> \epsilon\right\}.
\end{equation*}
Note that the deployment risk $R(u)=(1-\rho_{\text{min}})\mathbb{E}[l_t\mathbb{I}\{U_t<u\}]$ is non-decreasing, i.e.,
\begin{equation*}
R(u)\leq R(u')\quad\text{for any } u\leq u'.
\end{equation*}
Therefore, we have 
\begin{equation}\label{pfeq1}
\{\exists t\geq1,~\text{s.t. } R(\hat{u}_t)>\epsilon\} = \{\exists t\geq 1,~\text{s.t. } \hat u_t \geq u^*\}.
\end{equation}
By the definition of $\hat u_t$ that 
\begin{equation*}
\hat u_t=\max\left\{u^{(i)}\in \mathcal{U}: \forall j\leq i,
K_t(u^{(j)})\geq \frac{1}{\alpha}\right\},
\end{equation*}
we have $K_{t}(\hat u_t)\geq 1/\alpha$. 
Again, by the fixed sequence testing, we have
\begin{equation}\label{pfeq2}
\{\exists t\geq 1,~\text{s.t. } \hat u_t \geq u^*\}
\subseteq
\left\{\exists t\geq 1,~\text{s.t. } K_t(u^*)\geq \frac{1}{\alpha}\right\}.
\end{equation}
By the definition of $u^*$, the null hypothesis $H_{0,u^*}:R(u^*) >\epsilon$ holds. Therefore, by Lemma \ref{re:supermartingale_property}, $(K_t(u^*))_{t\geq 0}$ is a non-negative supermartingale. By Lemma \ref{ville}, we have
\begin{equation}\label{pfeq3}
\mathbb{P}\left(\exists t\geq1,~K_t(u^*)\geq \frac{1}{\alpha}\right)\leq \alpha.
\end{equation}
According to \eqref{pfeq1}, \eqref{pfeq2}, and \eqref{pfeq3}, we have
\begin{equation*}
\mathbb{P}\left(\exists t\geq1,~R(\hat u_t)> \epsilon\right)
\leq
\mathbb{P}\left(\exists t\geq1,~K_t(u^*)\geq \frac{1}{\alpha}\right)
\leq \alpha.
\end{equation*}
Therefore, we have
\begin{equation}\label{safety_of_deploy}
\mathbb{P}(\forall t:R(\hat u_t)\leq \epsilon) \geq 1-\alpha,
\end{equation}
By Lemma~\ref{equv}, for $\rho_{\text{warm}}\geq \rho_{\text{deploy}}$, we have
\begin{equation*}
R_t(u)=\frac{1-\rho_{\text{warm}}}{1-\rho_{\text{deploy}}}R(u)\mathbb{I}(t\leq T_{\text{warm}})+R(u)\mathbb{I}(t >T_{\text{warm}})\leq R(u).
\end{equation*}
Therefore, we have
\begin{equation}\label{PAC_safe}
\mathbb{P}(\forall t:R_t(\hat u_t)\leq \epsilon) 
\geq \mathbb{P}(\forall t:R(\hat u_t)\leq \epsilon) 
\geq 1-\alpha,
\end{equation}
which completes the proof.
\end{proof}


\subsection{Proof of Theorem \ref{efficiency}}
\begin{proof}[Proof of Theorem \ref{efficiency}]
Fixed $u\in\mathcal{U}$.
Since $l_t\in[0,1]$ and the exploration probability is bounded below by $\rho_t>0$, we have $D_t(u)\in[\epsilon-(1-\rho_{\text{deploy}})/\rho_t,\epsilon]$. 
Let $M_t=\max\{\epsilon, (1-\rho_{\text{deploy}})/\rho_t-\epsilon\}$. Denote the feasible set of betting fractions by $\mathcal{K}_t=[0,c/M_t]$ for some constant $c\in(0,1)$. Therefore, for any fixed $\lambda\in\mathcal{K}_t$ and any $D_t(u)$, we have 
\begin{equation}\label{constraint}
1+\lambda D_t(u)\geq 1-c>0.
\end{equation}
We define the surrogate objective $\Phi_{t-1}(\lambda)$ as the sum of quadratic lower bounds plus a regularizer:
\begin{equation*}
\Phi_{t-1}(\lambda)=\sum_{i=1}^{t-1}(\lambda D_i(u)-\frac{1}{2}\lambda^2D_i^2(u))-\frac{\beta}{2}\lambda^2.
\end{equation*}
We first prove that the $\lambda_t$ given by \eqref{hyparameter} satisfies 
\begin{equation*}
\lambda_t(u)=\arg\max_{\lambda\in \mathcal{K}_t}\Phi_{t-1}(\lambda),
\end{equation*}
with $\beta=1$.
The gradient of $\Phi_{t-1}(\lambda)$ with respect to $\lambda$ is 
\begin{equation*}
\nabla_{\lambda}\Phi_{t-1}(\lambda)=\sum_{i=1}^{t-1} D_i(u)-\lambda\sum_{i=1}^{t-1}D_i^2(u)-\beta\lambda.
\end{equation*}
By setting $\nabla_{\lambda}\Phi_{t-1}(\lambda)=0$, the unconstrained solution of $\lambda$ is
\begin{equation*}
\lambda_t^{\text{raw}}(u):=\frac{\sum_{i=1}^{t-1}D_i(u)}{\sum_{i=1}^{t-1}D_i^2(u)+\beta}.
\end{equation*}
By straightforward computation, we can check that $\Phi_{t-1}(\lambda)$ is a concave function of $\lambda$. Since the constraint set $\mathcal{K}_t$ is convex, the constrained optimum is simply the projection of $\lambda_t^{\text{raw}}(u)$ onto $\mathcal{K}_t$: 
\begin{equation*}
\arg \max_{\lambda\in\mathcal{K}_t}\Phi_{t-1}(\lambda)=\text{proj}_{\mathcal{K}_t}(\lambda_t^{\text{raw}}(u))
=\min\left\{\max\left\{\frac{\sum_{i=0}^{t-1} D_i(u)}{\sum_{i=0}^{t-1}D_i^2(u)+\beta},0\right\},\frac{c}{M_t}\right\}.
\end{equation*}
We complete the proof by choosing $\beta=1$.

Second, we derive the regret bound as follows.
Let $g_t(\lambda)=\lambda D_t(u)-\lambda^2D^2_t(u)/2$ be the quadratic proxy.
The regularizer is given by $\psi(\lambda)=\beta\lambda^2/2$.
By the standard FTRL bound on $g_t$~\cite{hazan2016introduction}, we have
\begin{equation}\label{quadratic_bound}
\sum_{t=1}^T (g_t(\lambda^*)-g_t(\lambda_t))\leq \psi(\lambda^*) - \psi(\lambda_1) + \frac{1}{2}\sum_{t=1}^{T} \frac{(\nabla g_t(\lambda_t))^2}{-\nabla^2 \Phi_{t-1}(\lambda_t)}.
\end{equation}
Since $\lambda_T^* \in [0, c/M_T]$, we have
\begin{equation}\label{first_term}
\psi(\lambda_T^*) - \psi(\lambda_1) \leq \frac{\beta}{2}(\lambda_T^*)^2 \leq \frac{\beta c^2}{2M_T^2}.
\end{equation}
For $\nabla^2 \Phi_{t-1}(\lambda_t)$, we have $-\nabla^2 \Phi_{t-1}(\lambda_t)=\sum_{i=1}^{t-1}D_i^2(u)+\beta$.
For $\nabla g_t(\lambda_t)$, we have 
\begin{equation*}
(\nabla g_t(\lambda_t))^2 = \left(D_t(u) - \lambda_t D_t^2(u) \right)^2 = D_t^2(u) \left(1 - \lambda_t D_t(u)\right)^2.
\end{equation*}
By $|\lambda_t D_t(u)| \leq c$, we have 
\begin{equation}\label{first_der}
(\nabla g_t(\lambda_t))^2 \leq \left(1 +c\right)^2 D_t^2(u).
\end{equation}
By \eqref{first_der}, we have 
\begin{equation}\label{second_term}
\sum_{t=1}^{T} \frac{(\nabla g_t(\lambda_t))^2}{-\nabla^2 \Phi_{t-1}(\lambda_t)} 
\leq \sum_{t=1}^{T} \frac{(1+c)^2 D_t^2(u)}{\sum_{i=1}^{t-1}D_i^2(u) + \beta}
= (1+c)^2 \sum_{t=1}^{T} \frac{D_t^2(u)}{\sum_{i=1}^{t-1}D_i^2(u) + \beta}.
\end{equation}
Note that
\[
\frac{D_t^2(u)}{\sum_{i=1}^{t-1}D_i^2(u)+\beta}\leq\frac{M_t^2}{\beta
}.
\]
Let $M=\sup_{t\geq1}M_t$.
By Lemma~\ref{sum_log} , we have
\begin{equation}\label{sum_ratio}
\sum_{t=1}^{T} \frac{D_t^2(u)}{\sum_{i=1}^{t-1}D_i^2(u) +\beta} \leq \frac{M^2/\beta}{\log(1+M^2/\beta)}\log\left(\frac{\sum_{t=1}^T D_t^2(u)}{\beta} + 1\right).
\end{equation}
Combining \eqref{quadratic_bound}, \eqref{first_term}, \eqref{second_term}, and \eqref{sum_ratio}, we have
\begin{equation*}
\mathcal{R}_T^{\text{quad}} \leq \frac{\beta c^2}{2M_T^2} + \frac{ (1+c)^2M^2}{2\beta\log(1+M^2/\beta)} \log\left(\frac{\sum_{t=1}^T D_t^2(u)}{\beta} + 1\right).
\end{equation*}
For $T>T_{\text{warm}}$, we have $M_T=M$.
Using the worst-case bound $\sum D_t^2(u) \leq T M^2$, we have
\begin{equation*}
\mathcal{R}^{\text{quad}}_T \leq \frac{\beta c^2}{2M^2} + \frac{(1+c)^2M^2}{2\beta\log(1+M^2/\beta)} \log\left(\frac{T M^2}{ \beta} + 1\right).
\end{equation*}
By choosing $\beta=1$, we complete the proof of Theorem 4.3. 
\end{proof}

\subsection{Proof of Theorem \ref{safety_non_stationary}}
\begin{proof}[Proof of Theorem \ref{safety_non_stationary}]
Let
\begin{equation*}
Y_t(u)=\prod_{i=1}^t\frac{1+\lambda_i(u)D_i(u)}{1+\lambda_i(u)\mathbb{E}[D_i(u)|\mathcal{F}_{i-1}]},\quad H_t=\prod_{i=1}^{t}(1+\lambda_i(u)\mathbb{E}[D_i(u)|\mathcal{F}_{i-1}]).
\end{equation*}
We have $K_t(u)=Y_t(u)H_t(u)$. Note that $(Y_t)_{t\geq0}$ is a $(\mathcal{F}_t)$-adapted martingale since
\begin{align*}
\mathbb{E}[Y_t(u)|\mathcal{F}_{t-1}]=Y_{t-1}\mathbb{E}\left[\frac{1+\lambda_t(u)D_t(u)}{1+\lambda_t(u)\mathbb{E}[D_t(u)|\mathcal{F}_{t-1}]}|\mathcal{F}_{t-1}\right]=Y_{t-1}.
\end{align*}
It holds that $\mathbb{E}[Y_t]=1$ for $t\geq 0$.
In addition, by $1+x\leq e^x$, we have
\begin{equation}\label{eq:Ht-bound}
\begin{aligned}
H_t(u)
&\leq \prod_{i=1}^t \exp\left\{\lambda_i(u)\mathbb{E}[D_{i}(u)|\mathcal{F}_{i-1}]\right\}\\
&=\exp\left\{ \sum_{i=1}^{t}\lambda_i(u)\mathbb{E}[D_i(u)|\mathcal{F}_{i-1}]\right\}\\
&\overset{(i)}{=}\exp\left\{\sum_{i=1}^t \lambda_i(u)\left(\epsilon-r_i(u)\right)\right\}\\
&=\exp\left\{\sum_{i=1}^t \lambda_i(u)\left(\epsilon- \frac{\sum_{j=1}^t  \lambda_j(u) r_j(u)}{\sum_{i=1}^t \lambda_i (u)}\right)\right\}\\
&\overset{(ii)}{=}\exp\left\{\sum_{i=1}^t \lambda_i (u) (\epsilon-L_t(u))\right\},
\end{aligned}
\end{equation}
where $(i)$ and $(ii)$ hold by Lemma~\ref{equv} and~\eqref{weighted_risk}.

Let $M_t=\sum_{u\in\mathcal{U}}w(u)Y_t(u)$. We have
\begin{equation*}
\mathbb{E}[M_t|\mathcal{F}_{t-1}]=\sum_{u\in\mathcal{U}}w(u)\mathbb{E}[Y_t(u)|\mathcal{F}_{t-1}]=\sum_{u\in\mathcal{U}}w(u)Y_{t-1}(u)=M_{t-1},
\end{equation*}
\begin{equation*}
\mathbb{E}[M_t]=\sum_{u\in\mathcal{U}}w(u)\mathbb{E}[Y_t(u)]=1.
\end{equation*}
Therefore, the process $(M_t)_{t\geq0}$ is also a $(\mathcal{F}_t)_{t\geq0}$-adapted martingale.
Therefore, by Lemma~\ref{ville}, we have
\begin{equation}\label{vill_mix_process}
\mathbb{P}\left(\exists t\in\mathbb{N}:M_t\geq \frac{1}{\alpha}\right)\leq\alpha.
\end{equation}

Let 
\begin{equation*}
\mathcal{E}=\left\{\exists t\in\mathbb{N}, \exists u^*\in\mathcal{U}:L_t(u^*)>\epsilon,K_t(u^*)\geq \frac{1}{\alpha \nu(u^*)}\right\}.
\end{equation*}
On the event $\mathcal{E}$, by \eqref{eq:Ht-bound}, we have 
\begin{equation}\label{eq:19}
\frac{1}{\alpha \nu(u^*)} \leq K_t(u^*)=Y_t(u^*)H_t(u^*)\leq Y_t(u^*).
\end{equation}
By \eqref{eq:19}, we have 
\begin{align*}
M_t=\sum_{u\in\mathcal{U}}\nu(u)Y_t(u)\geq \nu(u^*)Y_t(u^*)\geq \nu(u^*)\frac{1}{\alpha \nu(u^*)}=\frac{1}{\alpha}.
\end{align*}
By \eqref{vill_mix_process} and \eqref{eq:19}, we have 
\begin{equation*}
\mathbb{P}(\mathcal{E})\leq \mathbb{P}\left(\exists t\in \mathbb{N}: M_t \geq \frac{1}{\alpha}\right) \leq \alpha.
\end{equation*}
Let $\hat{\mathcal{S}}_t=\{u\in\mathcal{U}:K_t(u)\geq 1/(\alpha \nu(u))\}$.
Recall that $\hat{u}_t$ is the element of $\hat{\mathcal{S}}_t$ with largest value. We have 
\begin{align*}
\mathbb{P}(\exists t\in\mathbb{N}:L_t(\hat{u}_t)>\epsilon)
&\leq \mathbb{P}(\exists t\in\mathbb{N}:\exists u^*\in\hat{\mathcal{S}}_t, L_t(u^*)>\epsilon)\leq \mathbb{P}(\mathcal{E})\leq \alpha.
\end{align*}
Therefore, we have
\begin{equation*}
\mathbb{P}(\forall t\in\mathbb{N}:L_t(\hat{u}_t)\leq\epsilon)\geq 1-\alpha.
\end{equation*}
This completes the proof.
\end{proof}

\section{Experimental Details}\label{exp:details}

\subsection{Details of Datasets}
Given the substantial computational cost of evaluating long reasoning chains, we randomly sampled an initial pool of instances: 6,000 for Magpie, 5,000 for MATH, 4,992 for MMLU-Pro, and 6,511 for BBH. Following the protocol in \cite{zeng2025pac} and our discussion in Appendix~\ref{der_discussion}, we focus on scenarios where the thinking model demonstrates superior or the same capability over the non-thinking counterpart. Specifically, for reasoning tasks (MATH, MMLU-Pro, BBH), we retained only instances where the thinking model predicted the correct answer. For the open-ended Magpie dataset, we selected instances where the thinking model achieved a higher preference score. This filtering process yielded the final datasets for sequential evaluation: MMLU-Pro (3,738), MATH (4,628), BBH (5,476), and Magpie (4,379). 
Furthermore, we constructed two specialized evaluation streams. First, to demonstrate the efficiency gain from dynamic threshold relaxation (Figure~\ref{fig:qwen3ins_bbh_magpie}), we created a mixed dataset comprising all valid Magpie samples and a random subset of 100 BBH samples. Second, to evaluate robustness against non-stationary data (Figure~\ref{fig:placeholder}), we synthesized a distribution-shift stream by combining 1,500 samples from MMLU-Pro and 3,000 samples from BBH.

\subsection{Detailed Settings of LLMs}
In this study, we adhere to specific decoding configurations for different model variants. For Qwen3-4B-Instruct-2507, we set the hyperparameters as follows: temperature $T=0.7$, Top-$P=0.8$, Top-$K=20$, and min-$P=0$. For the reasoning-focused Qwen3-4B-Thinking-2507, we adopt a slightly more deterministic setting with $T=0.6$, Top-$P=0.95$, Top-$K=20$, and min-$P=0$. All experiments were conducted on NVIDIA A100-SXM4 GPU (40GB). For the evaluation of the Magpie dataset, we employ GPT-4o-mini as the judge via the official API, utilizing default parameter settings. The corresponding prompt is presented at Figure~\ref{prompt}.

\begin{figure*}[h]
    \begin{tcolorbox}[
          title=\textbf{Evaluation Prompt}, 
          colback=white,
          colframe=black,
          left=1em,
          right=1em,
          boxsep=0.8mm
        ]
        \footnotesize
        You are a careful, unbiased and strict evaluator.

        \vspace{0.5em}
        Given a user question and an assistant answer, evaluate the overall quality of the answer.
        Consider the following aspects:
        \begin{itemize}[leftmargin=1.2em, nosep] 
            \item Correctness and factual accuracy
            \item Helpfulness and relevance to the question
            \item Clarity and completeness
            \item Following the user’s instructions
        \end{itemize}

        \vspace{0.5em}
        Provide a single numeric score from 1 to 10, where:\\
        1 = very poor answer\\
        10 = excellent answer

        \vspace{0.5em}
        Do not provide any explanation. Output only the number.

        \vspace{0.8em}
        User Question:\\
        \texttt{\{question\}}

        \vspace{0.5em}
        Assistant Answer:\\
        \texttt{\{answer\}}
    \end{tcolorbox}
    \caption{Prompt used for evaluating response quality on Magpie dataset.}
    \label{prompt}
\end{figure*}

\subsection{Details of Baselines and B-PAC reasoning}
In this subsection, we introduce two baselines for online efficient reasoning, including \text{O-naive} and \text{IPS+Hoeff}. Notations are consistent with those of B-PAC reasoning. Then we present the settings of B-PAC reasoning in our experiments.

\paragraph{O-Naive.} 
This baseline represents a heuristic strategy adopted in scenarios with partial feedback. 
It operates under the optimistic assumption that unobserved instances are error-free.
Specifically, if the thinking model is not invoked ($\xi_t = 0$), then the unobserved loss is regarded as $0$. 
Consequently, the risk is estimated by the average of the observed losses: 
\begin{equation*}
    \hat{R}^{\text{o-naive}}_t(u) = \frac{1}{t} \sum_{i=1}^t \left( \xi_i l_i + (1-\xi_i) \cdot 0 \right) \cdot \mathbb{I}(U_i < u) = \frac{1}{t} \sum_{i=1}^t \xi_i l_i \mathbb{I}(U_i < u).
\end{equation*}
Then \text{O-Naive} greedily selects the maximum threshold $\hat{u}_t$ satisfying $\hat{R}^{\text{o-naive}}_t(\hat{u}_t) \le \epsilon$, i.e.,
\[
\hat{u}_t=\max\{u\in\mathcal{U}:\hat{R}_{t}^{\text{o-naive}}(u)\leq \epsilon\}.
\]

\paragraph{IPS+Hoeff.} 
We provide another anytime safe method for efficient reasoning, named \text{IPS+Hoeff}.
Unlike \text{O-Naive}, this baseline leverages the IPS estimator and the Hoeffding's inequality combined with a union bound to ensure the anytime safety.
Specifically, let $Z_t(u)=(1-\rho)l_t\xi_t\mathbb{I}\{U_t<u\}/\pi_t$ as in \eqref{IPS_estimator}. Let $\tilde{M}=(1-\rho)/\rho$ and $\alpha_t =6\alpha/(\pi^2t^2)$. 
At time $t$, \texttt{IPS+Hoeff} selects the maximum threshold $\hat{u}_t$ such that the upper confidence bound (UCB) of the risk is within $\epsilon$, i.e.,
\begin{equation*}
\hat{u}_t = \max\left\{u\in \mathcal{U}:  \frac{1}{t} \sum_{i=1}^t Z_i(u) + \tilde{M} \sqrt{\frac{\log(N/\alpha_t)}{2t}} \le \epsilon\right\}.
\end{equation*}

\begin{proposition}
Suppose $(X_t)_{t\geq1}$ are independent and identically distributed.
The method of \text{IPS+Hoeff} satisfies that $\mathbb{P}(\exists t \ge 1: R_t(\hat{u}_t) > \epsilon) \le \alpha.$
\end{proposition}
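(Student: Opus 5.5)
Here $\rho$ is a constant exploration probability, so $\rho_{\min}=\rho$ and $R_t(u)=R(u)=(1-\rho)\mathbb{E}[l_t\mathbb{I}\{U_t<u\}]$ by Lemma~\ref{equv}. I would reduce the anytime statement to a union bound over times $t$ and grid points $u\in\mathcal{U}$. The key fact is a martingale-difference structure. Set $\Delta_i(u)=Z_i(u)-R(u)$. By the computation in Lemma~\ref{equv}, conditioning on $\mathcal{F}_{i-1}$ and $X_i$ and using that $\pi_i$ is determined by $\hat u_{i-1}\in\mathcal{F}_{i-1}$ and $U_i$, we get $\mathbb{E}[Z_i(u)\mid\mathcal{F}_{i-1}]=(1-\rho)\mathbb{E}[l_i\mathbb{I}\{U_i<u\}]=R(u)$. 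Moreover $Z_i(u)\in[0,(1-\rho)/\rho]=[0,\tilde M]$, since $\pi_i\ge\rho$ and $l_i\le1$. Hence $\{\Delta_i(u)\}$ is a bounded martingale difference sequence, with each term in an interval of length $\tilde M$.

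The first step is a fixed-$(t,u)$ deviation bound. The $Z_i$ are not independent, because $\pi_i$ depends on past thresholds, so I would use the Azuma--Hoeffding inequality rather than the classical Hoeffding bound. The conditional range of $Z_i(u)$ is $[0,\tilde M]$, and this range is predictable, so Hoeffding's lemma applies conditionally. This gives
\[
\mathbb{P}\Big(R(u)-\tfrac1t\textstyle\sum_{i\le t}Z_i(u)\ge \tilde M\sqrt{\log(N/\alpha_t)/(2t)}\Big)\le \alpha_t/N .
\]
I expect this to be the only delicate point: one must check that Hoeffding's lemma is applied to the conditional distribution given $\mathcal{F}_{i-1}$, with the exponential supermartingale built from $\prod_i e^{s\Delta_i-s^2\tilde M^2/8}$.

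The second step is the union bound. Define the good event $G$ on which, for every $t\ge1$ and every $u\in\mathcal{U}$,
\[
R(u)<\tfrac1t\textstyle\sum_{i\le t}Z_i(u)+\tilde M\sqrt{\log(N/\alpha_t)/(2t)}.
\]
Summing the step-one bound over the $N$ grid points and over $t$ gives
\[
\mathbb{P}(G^c)\le\sum_{t\ge1}\alpha_t=\alpha\cdot\frac{6}{\pi^2}\sum_t t^{-2}=\alpha .
\]

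Finally, on $G$, fix any $t$. The selected $\hat u_t$ satisfies the UCB condition, so $R(\hat u_t)$ is below a UCB that is itself at most $\epsilon$, i.e. $R(\hat u_t)\le\epsilon$. The randomness of $\hat u_t$ causes no trouble, because $G$ holds simultaneously for all $u\in\mathcal{U}$. If the set in the definition of $\hat u_t$ is empty, take $\hat u_t=0$, which gives $R(0)=0\le\epsilon$. Since $R_t=R$, the event $\{\exists t:R_t(\hat u_t)>\epsilon\}$ is contained in $G^c$, which proves the proposition.
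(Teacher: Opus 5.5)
Your argument is correct. Its skeleton matches the paper's proof: a per-$(t,u)$ tail bound $\exp(-2t\eta^2/\tilde M^2)$ with $\eta=\tilde M\sqrt{\log(N/\alpha_t)/(2t)}$, then a union bound over the $N$ grid points, then a union bound over $t$ using $\sum_t\alpha_t=\alpha$.

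The genuine difference is the concentration step. The paper asserts that $Z_1(u),\dots,Z_t(u)$ are independent and applies the classical Hoeffding inequality. That independence claim does not hold. Each $\pi_i$ depends on $\hat u_{i-1}$, which is a function of $Z_1(\cdot),\dots,Z_{i-1}(\cdot)$, so the $Z_i(u)$ are in general dependent. What does hold is exactly what you use:
\begin{itemize}
\item $\mathbb{E}[Z_i(u)\mid\mathcal{F}_{i-1}]=R(u)$ for every $i$, by the computation in Lemma~\ref{equv};
\item $Z_i(u)\in[0,\tilde M]$ deterministically.
\end{itemize}
With these two facts, Azuma--Hoeffding gives the same exponent, and the rest of the paper's argument goes through unchanged. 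So your version is the rigorous form of the paper's proof rather than a competing route.

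You also make explicit two points the paper leaves implicit. First, the good event holds uniformly over $\mathcal{U}$, so the data-dependent $\hat u_t$ is covered. Second, the empty-set convention $\hat u_t=0$ gives $R(0)=0$. Finally, your count $N\cdot\alpha_t/N=\alpha_t$ corrects the paper's $\alpha_t/K$ typo.
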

\begin{proof}
For any fixed threshold $u \in \mathcal{U}$. The IPS estimator $Z_1(u), \dots, Z_t(u)$ are independent and bounded in $[0, \tilde{M}]$. By Hoeffding's inequality, we have
\begin{equation*}
    \mathbb{P}\left( R_t(u) > \frac{1}{t}\sum_{i=1}^t Z_i(u) + \eta \right) \le \exp\left( - \frac{2t \eta^2}{\tilde{M}^2} \right).
\end{equation*}
Let $\exp\left( - 2t \eta^2/\tilde{M}^2 \right)=\alpha_t/N$. We have $\eta=\tilde{M} \sqrt{\log(N/\alpha_t)/2t}$.
Then we have
\begin{equation*}
    \mathbb{P}\left(\exists u \in \mathcal{U}: R_t(u) > \frac{1}{t}\sum_{i=1}^t Z_i(u) + \tilde{M} \sqrt{\frac{\log(N/\alpha_t)}{2t}}\right) \le \sum_{u \in \mathcal{U}} \frac{\alpha_t}{K} = \alpha_t.
\end{equation*}
Finally, applying the union bound over $t \ge 1$, we have
\begin{equation*}
    \mathbb{P}\left(\exists t: \exists u \in \mathcal{U}: R_t(u) > \frac{1}{t}\sum_{i=1}^t Z_i(u) + \tilde{M} \sqrt{\frac{\log(N/\alpha_t)}{2t}}\right) \le \sum_{t=1}^{\infty} \alpha_t = \sum_{t=1}^{\infty} \frac{6\alpha}{\pi^2 t^2} = \alpha.
\end{equation*}
This completes the proof.
\end{proof}
We note that a stronger baseline with a tighter UCB may be derived by leveraging the idea of bootstrap or random weighting~\citep{ming2028random}. But, in our experiment, we choose $\hat{u}_t$ by
\begin{equation*}
\hat{u}_t = \max\left\{u\in \mathcal{U}:  \frac{1}{t} \sum_{i=1}^t Z_i(u) + \tilde{M} \sqrt{\frac{\log(1/\alpha_t)}{2t}} \le \epsilon\right\},
\end{equation*}
which is more efficient than the initial IPS+Hoeff. Under this strategy, IPS+Hoeff still yields no efficiency, which demonstrates the necessity of the proposed betting framework.

\paragraph{Settings of B-PAC reasoning.}
We take $(\rho_t)_{t\geq1}$ and $(\lambda_t(u))_{t\geq1}$ as in \eqref{eq:rho_strategy} and \eqref{hyparameter}, respectively.
Specifically, we choose
$T_{\text{warm}}=200$, $\rho_{\text{warm}}=0.7$, and $\rho_{\text{deploy}}=0.05$ for $(\rho_t)_{t\geq1}$. In this case, we have $\rho_{\text{min}}=0.05$.
For \eqref{hyparameter}, we simply choose $c=0.9$.
The search space of threshold is $\mathcal{U}=\{0,0.001,0.002,\dots,1\}$ and the threshold $\hat{u}_t$
is determined by \eqref{threshold}.
Note that for all experiments, we adopt the above settings without making any adjustments involving hyperparameters.

\subsection{Details of Uncertainty Scores}
Across this paper, we use the logits score~\cite{zeng2025pac,hao2023reasoning,huang2025look} as uncertainty score for both B-PAC reasoning and the method of \citet{zeng2025pac}.
Specifically, let $\tilde{f}(X_t)=\tilde{y}_t=(\tilde{y}_{t,1},\dots,\tilde{y}_{t,\tilde{h}(X_i)})$ denote the output of non-thinking model $\tilde{f}(X_t)$ consisting of $\tilde{h}(X_i)$ tokens, where $\tilde{y}_{t,j}$ representing the $j$-th token of $\tilde{y}_{t}$.
The uncertainty score of $\tilde{f}(X_t)$ is defined by the averaged token probability:
\begin{equation*}
U_{t}=1-\frac{1}{\tilde{h}(X_t)}\sum_{j=1}^{\tilde{h}(X_t)}\mathbb{P}(\tilde{y}_{t,j}|y_{t,1},\dots,y_{t,j-1},X_{t}),
\end{equation*}
where $\mathbb{P}(\tilde{y}_{t,j}|y_{t,1},\dots,y_{t,j-1},X_{t})$ is the conditional probability of token $\tilde{y}_{t,j}$ computed from the prediction logits.
\subsection{Details of Loss Functions}
\paragraph{Binary 0-1 loss.}
Let $Y_t$ be the ground truth of question $X_t$.
For scenarios where the correctness of the answer is verifiable such as in mathematical problem-solving or multiple-choice question answering, we use the binary 0–1 loss:
\begin{equation*}
l(\hat{f}_t(X_t),f(X_t))=\mathbb{I}\{\hat{f}_t(X_t)\neq f(X_t)\}\mathbb{I}\{f(X_t)=Y_t\}.
\end{equation*}

\paragraph{LLM-judge loss.}
For open-ended tasks, exact matching is inapplicable. We employ an LLM-as-a-Judge to score responses.
Specifically, let  $s(\cdot)$ be the score assigned by an LLM judge.
We define the loss as the normalized square root of the score by
\begin{equation*}
l(\hat{f}_t(X_t),f(X_t))=\sqrt{\frac{\max\{0,s(f(X_t))-s(\hat f_t(X_t))\}}{||s||}},
\end{equation*}
where $\|s\|$ denotes the maximum possible score range.

\paragraph{Multi-type task loss.}
We define a task-dependent bounded loss function as follows.
Let $\mathcal{D}_{\text{verifiable}}$ denote tasks with ground truth (e.g., MATH), and let $\mathcal{D}_{\text{open-ended}}$ denote open-ended generation tasks.
Depending on the task type, the loss is formulated by
\begin{equation*}
\label{eq:loss_function}
l(\hat{f}_t(X_t), f(X_t)) = 
\begin{cases} 
\mathbb{I}\{\hat{f}_t(X_t) \neq f(X_t)\}, & \text{if } X_t \in \mathcal{D}_{\text{verifiable}}, \\
\sqrt{\frac{\max\{0,s(f(X_t)) - s(\hat{f}_t(X_t))\}}{\|s\|}}, & \text{if } X_t \in \mathcal{D}_{\text{open-ended}}.
\end{cases}
\end{equation*}

\begin{remark}
As discussed in Appendix \ref{der_discussion}, it is reasonable to only consider data satisfying $f(X_t)=Y_t$ for verifiable tasks and $f(X_t)\geq \tilde{f}(X_t)$ for open-ended tasks. In this case, we have $\mathbb{I}\{f(X_t)=Y_t\}=1$ for verifiable tasks and $\max\{0,s(f(X_t))-s(\hat f_t(X_t))\}=s(f(X_t))-s(\hat f_t(X_t))$ for open-ended tasks.
\end{remark}

\begin{remark}
We note that the LLM-judge loss may have inherent randomness, i.e., $l(\hat{f}_t(X_t),f(X_t))$ is a random variable given $\hat{f}_t(X_t)$ and $f(X_t)$. However, this randomness usually does not compromise the safety guarantee. Specifically, let $\tilde{l}(\hat{f}_t(X_t),f(X_t);e_t)$ be the random loss, where noises $(e_t)_{t\geq 1}$ are i.i.d. and independent of $(X_t)_{t\geq 1}$.
By the same proof of Theorem~\ref{anytime_valid_guarantee}, we have
\begin{equation*}
\mathbb{P}(\forall t\geq1: V_t(\hat{f}_t)\leq \epsilon)\geq 1-\alpha,
\end{equation*}
where $V_t(\hat{f}_t)=\mathbb{E}_{X\sim P_X, e_t\sim e}[\tilde{l}(\hat{f}_t(X_t),f(X_t);e_t)]$.
In the common additive-noise model where
\begin{equation*}
\tilde{l}(\hat{f}_t(X_t),f(X_t);e_t)=l(\hat{f}_t(X_t),f(X_t))+e_t,
\end{equation*}
we have $V_t(\hat{f}_t)=R_t(\hat{f}_t)$ by $\mathbb{E}[e_t]=0$.
Therefore, safety remains; the inherent randomness only introduces extra variance and may reduce efficiency.
\end{remark}

\begin{remark}
B-PAC reasoning only requires the loss function to be bounded, allowing us to use a variety of loss functions according to domain knowledge.
For example, for open-ended questions, one can also use the semantic cosine distance in an embedding space~\cite{zeng2025pac}, i.e.,
\begin{equation*}
l(\hat{f}_t(X_t),f(X_t))=1-\frac{v_{\hat{f}(X_t)}\cdot v_{f(X_t)}}{\|v_{\hat{f}(X_t)}\| \| v_{f(X_t)}\|},
\end{equation*}
where $v_{\hat{f}(X_t)}$ and $v_{f(X_t)}$ denote the embedding of $\hat{f}_t(X_t)$ and $f(X_t)$, respectively.
\end{remark}

\begin{figure*}[ht]
\vskip 0.2in
\centering

\begin{subfigure}{\textwidth}
  \centering
  \includegraphics[width=0.9\textwidth]{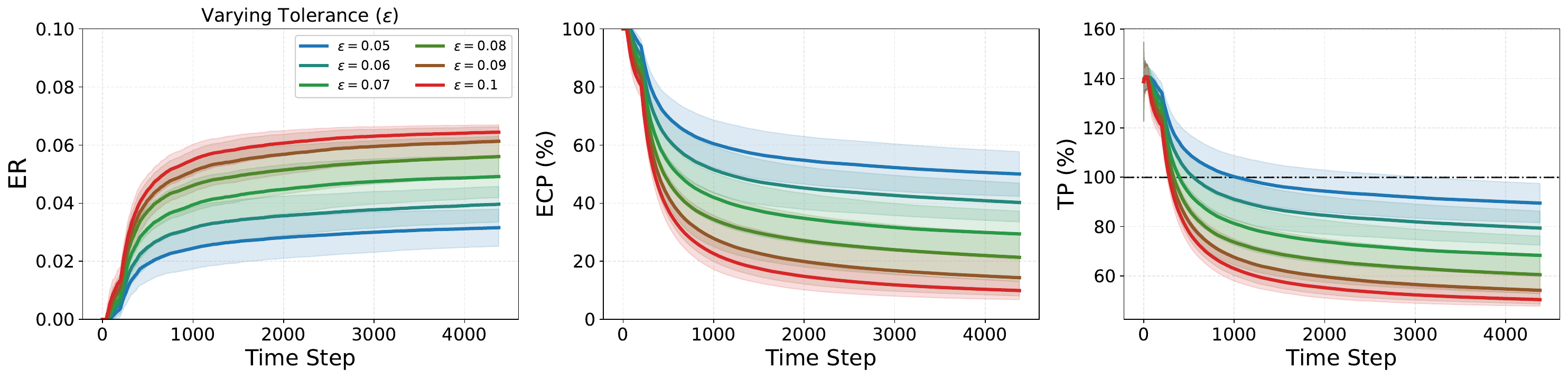}
  \caption*{(a) Magpie.}
  \label{epsilon:magpie}
\end{subfigure}

\begin{subfigure}{\textwidth}
  \centering
  \includegraphics[width=0.9\textwidth]{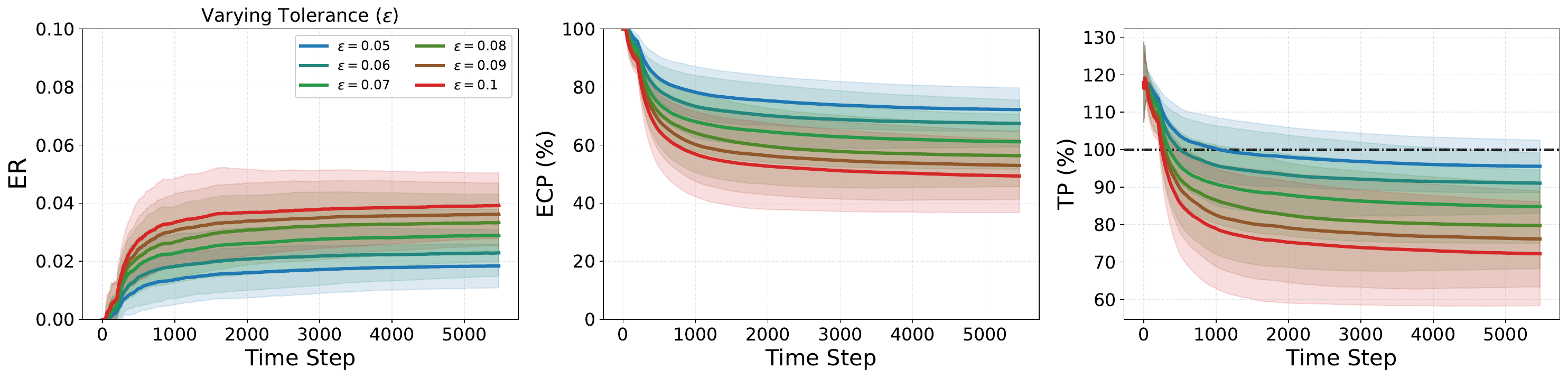}
  \caption*{(b) BBH.}
  \label{epsilon:bbh}
\end{subfigure}


\begin{subfigure}{\textwidth}
  \centering
  \includegraphics[width=0.9\textwidth]{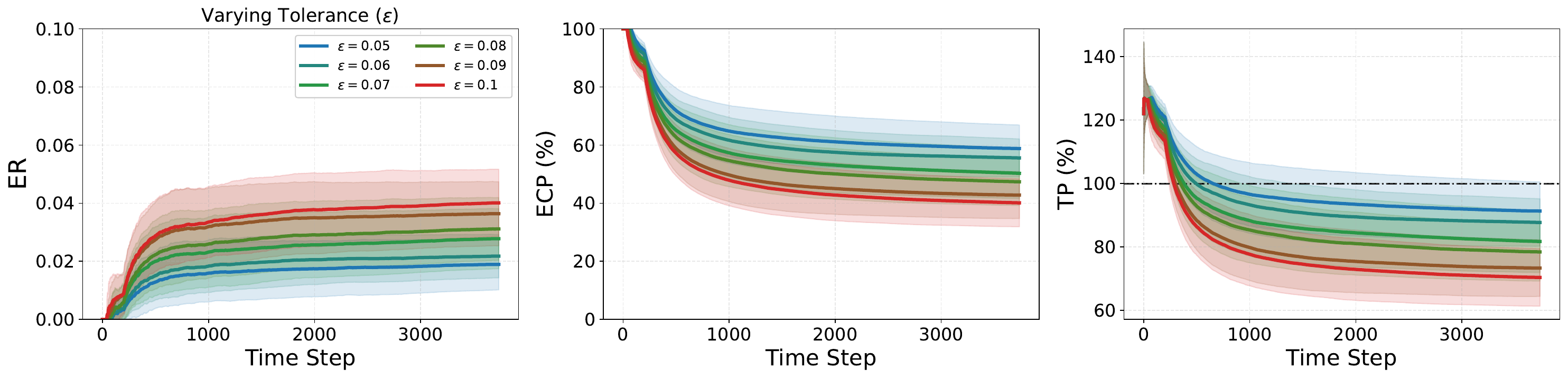}
  \caption*{(c) MMLU-Pro}
  \label{epsilon:mmlupro}
\end{subfigure}


\begin{subfigure}{\textwidth}
  \centering
  \includegraphics[width=0.9\textwidth]{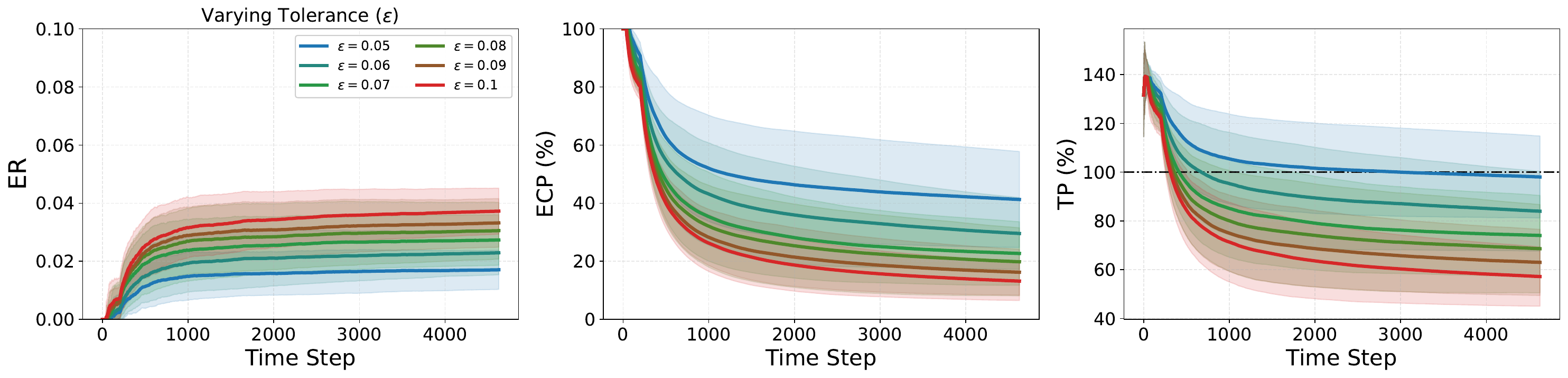}
  \caption*{(d) MATH}
  \label{epsilon:math}
\end{subfigure}

\caption{\textbf{Performance of B-PAC reasoning under different tolerance levels}. We consider $\epsilon\in\{0.05,0.06,0.07,0.08,0.09,0.10\}$ on four benchmarks including Magpie, BBH, MMLU-Pro, and MATH.
B-PAC reasoning ensures anytime-valid performance loss control across all settings, and achieves higher efficiency as $\epsilon$ increases.}
\label{different_epsilon}
\vskip -0.2in
\end{figure*}

\section{Additional Experimental Results}\label{additional_results}

\subsection{Performance under Other Uncertainty Scores}
We also evaluate the performance of B-PAC reasoning with uncertainty scores including Perplexity (PPL) and Entropy. The settings for B-PAC reasoning are identical as in the former experiments.

\begin{table}[h]
\centering
\caption{Results with different uncertainty scores on Magpie and MMLU-Pro. All values are reported as mean ± standard deviation.}
\label{tab:uncertainty_score_ablation}
\resizebox{0.8\columnwidth}{!}{
\begin{tabular}{llccc}
\toprule
\textbf{Dataset} & \textbf{Uncertainty Score} & \textbf{ER} & \textbf{ECP(\%)} & \textbf{TP(\%)} \\
\midrule
Magpie   & PPL     & $0.0551 \pm 0.007$ & $22.76 \pm 7.57$ & $61.69 \pm 7.22$ \\
Magpie   & Entropy & $0.0552 \pm 0.006$ & $23.12 \pm 6.51$ & $61.84 \pm 6.18$ \\
MMLU-Pro & PPL     & $0.0319 \pm 0.011$ & $43.24 \pm 8.19$ & $75.12 \pm 9.26$ \\
MMLU-Pro & Entropy & $0.0316 \pm 0.012$ & $43.08 \pm 8.66$ & $74.59 \pm 9.83$ \\
\bottomrule
\end{tabular}
}
\end{table}

By Table~\ref{tab:uncertainty_score_ablation}, B-PAC remains safe (ER$<$0.08) and efficient (small ECP and TP) for different uncertainty scores.

\subsection{Performance under Different Tolerance Level}
We investigate the impact of the user-specified error tolerance $\epsilon$ on the performance of B-PAC reasoning. 
We evaluate B-PAC on the MMLU-Pro, BBH, MATH, and Magpie with $\epsilon$ ranging from $0.05$ to $0.10$. As illustrated in Figure~\ref{different_epsilon}, B-PAC consistently maintains the empirical risk below the target $\epsilon$ across all levels. Furthermore, the results exhibit a clear trade-off between safety and efficiency: as $\epsilon$ decreases (imposing stricter safety constraints), B-PAC adaptively increases the expert call percentage (ECP) to ensure reliability, leading to lower token savings.
Conversely, a looser tolerance allows the framework to route more queries to the non-thinking model, thereby improving efficiency.

\subsection{Further Comparison with PAC Reasoning}

\begin{figure*}[ht]
\vskip 0.2in
\centering

\begin{subfigure}{\textwidth}
  \centering
  \includegraphics[width=\textwidth]{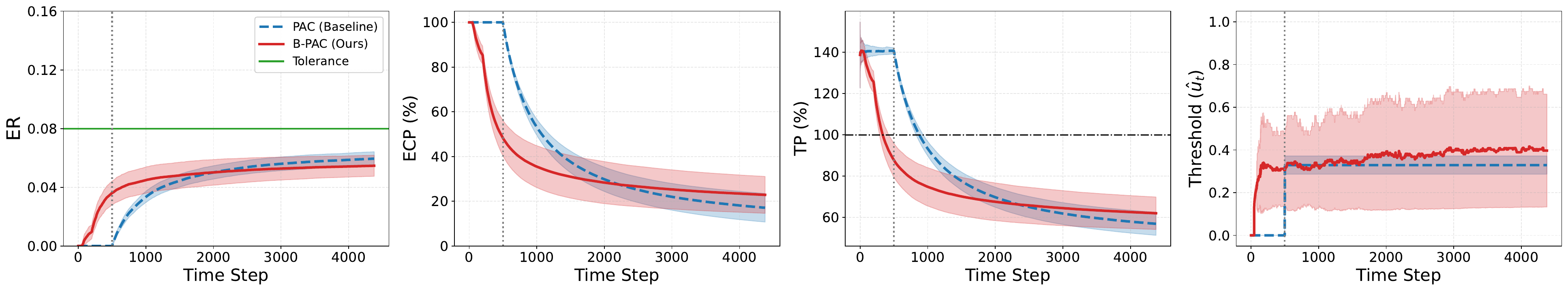}
  \caption*{(a) Magpie.}
  \label{threshold:magpie}
\end{subfigure}


\begin{subfigure}{\textwidth}
  \centering
  \includegraphics[width=\textwidth]{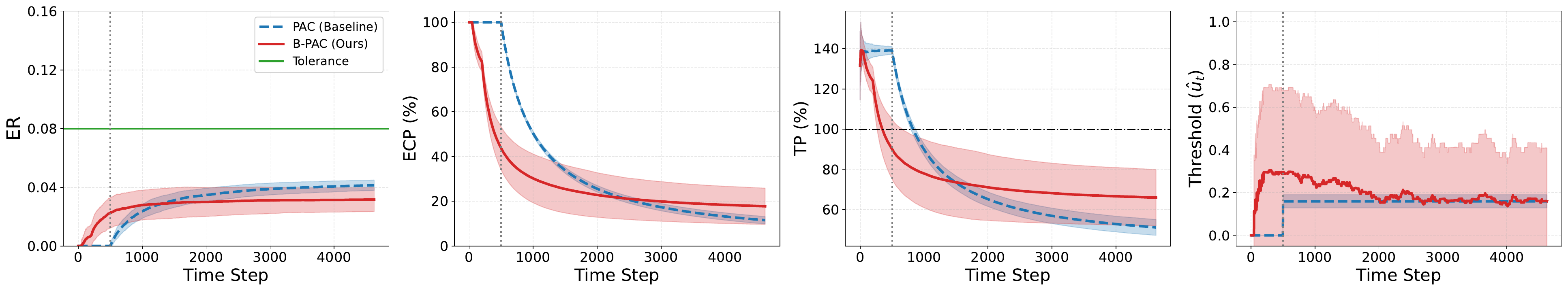}
  \caption*{(b) MATH.}
 \label{threshold:math}
\end{subfigure}

\caption{\textbf{Comparison with PAC reasoning on Magpie and MATH}. The results indicate that even under stationary data, B-PAC reasoning can approach or even surpass the efficiency of PAC by continuously updating the threshold.}
\label{threshold:different_benchmarks}
\vskip -0.2in
\end{figure*}

As shown in Figure \ref{fig:qwen3ins_bbh_magpie} and \ref{fig:placeholder}, B-PAC reasoning can enjoy higher efficiency and rigorous safety than PAC reasoning~\cite{zeng2025pac} for non-stationary settings.
However, it is important to acknowledge that B-PAC reasoning guarantees anytime-valid performance loss control, a significantly stricter constraint than the fixed-sample validity of standard PAC reasoning~\cite{zeng2025pac}. Theoretically, this rigorous safety comes at a cost, which necessitates a more conservative strategy (lower efficiency) in the early stages. In this subsection, we demonstrate that this trade-off is transient. As the time step $t$ increases and evidence accumulates, the betting martingale tightens its bounds, allowing the efficiency of B-PAC to gradually converge to, and potentially surpass, that of the offline PAC baseline.

To demonstrate this, we present the comparison results on Magpie and MATH in Figure~\ref{threshold:different_benchmarks}. On Magpie, the dynamic threshold $\hat{u}_t$ of B-PAC reasoning eventually surpasses the fixed threshold of PAC reasoning, potentially resulting in superior long-term efficiency (lower ECP and TP). On MATH, $\hat{u}_t$ converges closely to the PAC baseline. We attribute the small gap in ECP and TP to the mandatory exploration overhead. Specifically, unlike offline PAC, B-PAC maintains a minimum exploration probability ($\rho_{\text{deploy}}=0.05$) to ensure unbiased risk estimation via IPS. This cost is the necessary premium for maintaining anytime safety and the ability to adapt to potential future shifts.

\subsection{Ablation Study on Impact of the Adaptive Betting Strategy}

\begin{figure*}[ht]
\vskip 0.2in
\centering

\begin{subfigure}{\textwidth}
  \centering
  \includegraphics[width=\textwidth]{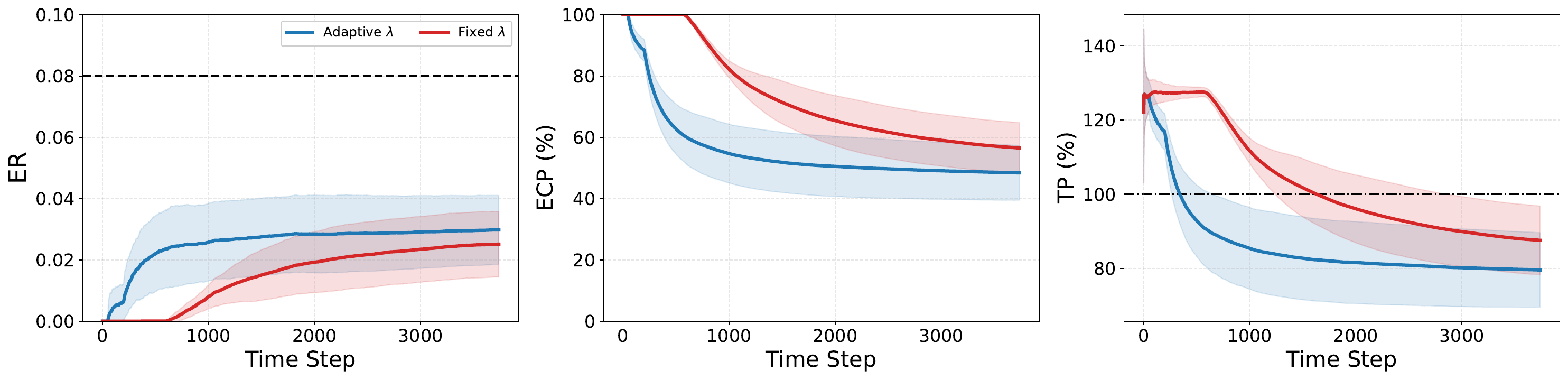}
  \caption*{(a) MMLU-Pro }
  \label{Ada_vs_fixed_MMLU-PRO}
\end{subfigure}

\begin{subfigure}{\textwidth}
  \centering
  \includegraphics[width=\textwidth]{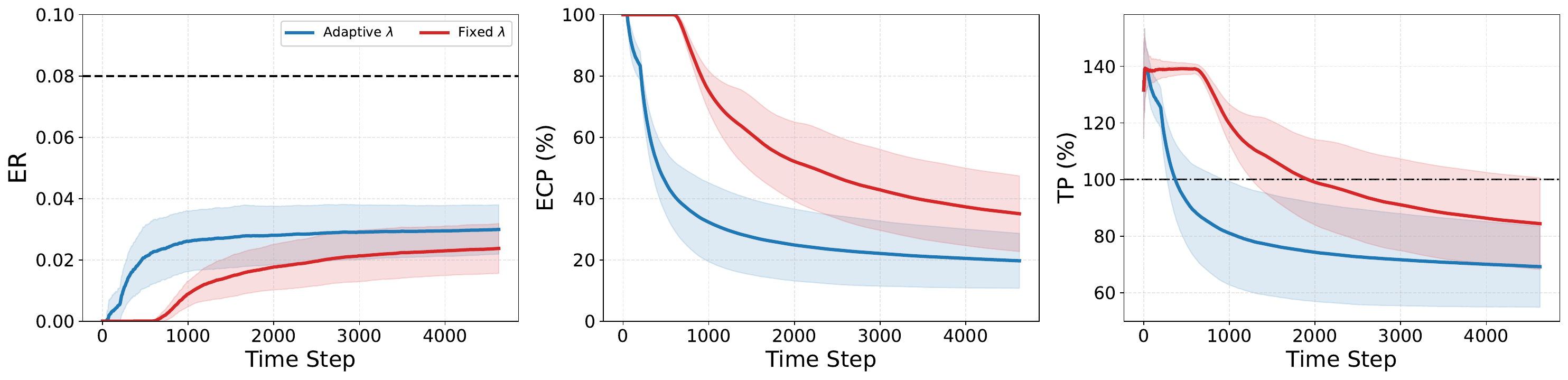}
  \caption*{(b) MATH.}
 \label{Ada_vs_fixed_MATH}
\end{subfigure}

\caption{\textbf{Ablation study on the betting strategy $\lambda_t$ on MMLU-Pro and MATH.} The blue curve represents the adaptive betting strategy given by \eqref{hyparameter}. The red curve represents the fixed betting strategy with $\lambda_t \equiv 0.05$.
Results demonstrate that the adaptive betting strategy is more efficient compared to the fixed betting strategy.}
\label{adptive_vs_fixed_lambda}
\vskip -0.2in
\end{figure*}

We investigate the contribution of the adaptive betting strategy $\lambda_t(u)$ given by \eqref{hyparameter} to the efficiency of B-PAC.
Under the two-stage strategy $\rho_t$, to guarantee $1+\lambda_tD_t \geq0$, the fixed betting fraction $\lambda_t$ should satisfy $\lambda_t\leq 1/(1/\rho_{\text{min}}-1-\epsilon)\approx0.053$. Therefore, we choose $\lambda_t=0.05$ for the fixed strategy.
Figure \ref{adptive_vs_fixed_lambda} presents the performance of B-PAC using our proposed adaptive method against a baseline with a fixed $\lambda_t=0.05$.
While both methods achieve anytime-valid performance loss control ($\text{ER} < \epsilon$), the adaptive strategy demonstrates significantly faster convergence in terms of efficiency. As shown in the ECP and TP curves, the adaptive approach rapidly reduces expert invocations in the early stages, whereas the fixed $\lambda_t$ approach remains overly conservative, yielding a much slower decline in computational cost. This disparity arises because the fixed $\lambda_t$ fails to capitalize on the accumulated wealth. In contrast, our adaptive strategy dynamically scales the wager size based on past evidence, allowing the framework to accelerate threshold updates when the empirical risk is low, thereby achieving a superior trade-off between safety and efficiency.

\subsection{Ablation Study on Impact of the Exploration Strategy}

\paragraph{Fixed $\rho_t$ vs two-stage $\rho_t$.}

\begin{figure*}[ht]
\vskip 0.2in
\centering

\begin{subfigure}{\textwidth}
  \centering
  \includegraphics[width=\textwidth]{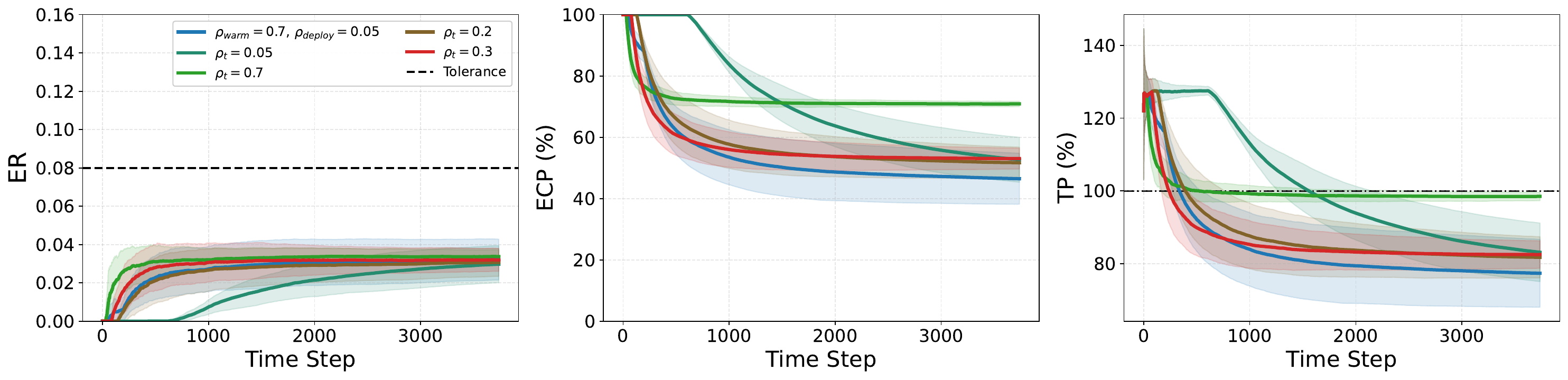}
  \caption*{(a) MMLU-Pro}
  \label{rho_twostage_MMLU-Pro}
\end{subfigure}

\begin{subfigure}{\textwidth}
  \centering
  \includegraphics[width=\textwidth]{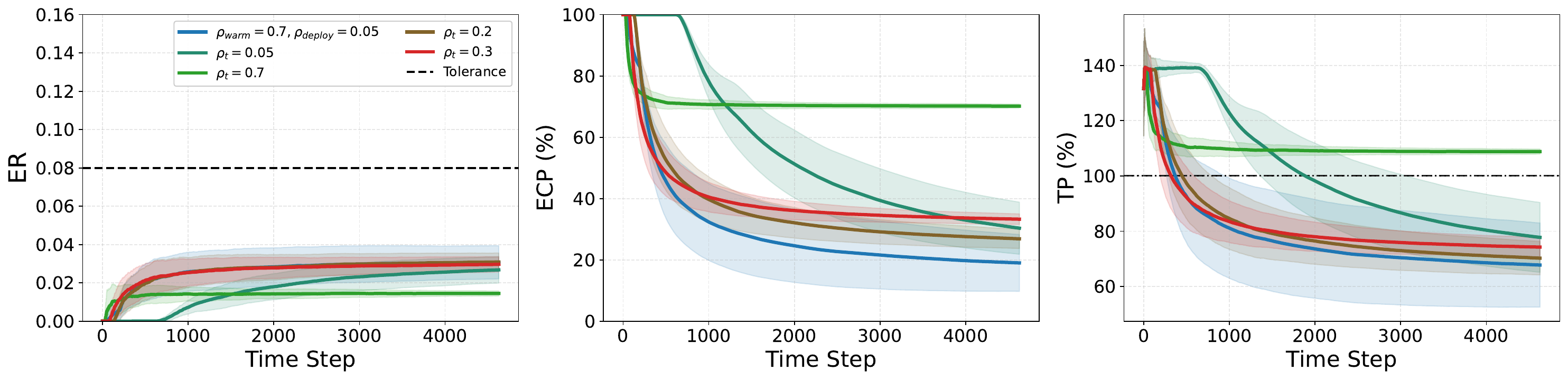}
  \caption*{(b) MATH.}
 \label{rho_twostage_math}
\end{subfigure}

\caption{\textbf{Ablation study on the exploration probability $\rho_t$.} We compare the performance of our proposed two-stage exploration strategy (blue) against various fixed exploration probabilities $\rho_t \in \{0.05, 0.2, 0.3, 0.7\}$ on MMLU-Pro and MATH.
Results indicate that the two-stage exploration strategy is more efficient compared to the fixed exploration strategy.
}
\label{strategy_of_rho}
\vskip -0.2in
\end{figure*}

We conduct an ablation study to validate the design of our two-stage exploration strategy. We compare the two-stage strategy against baselines with fixed exploration probabilities $\rho_t \in \{0.05, 0.2, 0.3, 0.7\}$ on MMLU-Pro and MATH; see Figure \ref{strategy_of_rho}. Across the two benchmarks, our two-stage strategy consistently achieves the highest efficiency, revealing the inherent limitation of fixed exploration strategies. Specifically, a small exploration probability (e.g., $\rho_t\equiv0.05$) causes the wealth process to accumulate positive evidence slowly. Consequently, the routing threshold $\hat{u}_t$ remains conservative for a prolonged period, resulting in unnecessarily high expert invocations during the early stages. In contrast, a large $\rho_t$ facilitates rapid threshold updates initially but incurs a high exploration cost. When a near-optimal safe threshold is identified, the system is still forced to query the thinking model for at least $\rho_t \times 100\%$ of instances. As discussed in Subsection~\ref{sec:3.3}, our two-stage strategy effectively searches for the optimal threshold in the early stage and reduces exploration cost in the later stage.

\paragraph{Sensitivity to warm-up duration $T_{\text{warm}}$.}
We further investigate the robustness of B-PAC reasoning with respect to the warm-up duration $T_{\text{warm}}$.
Figure~\ref{strategy_of_T_warm_up} compares the performance of B-PAC reasoning across $T_{\text{warm}}\in\{10,50,100,200,300,500\}$ on Magpie and MATH. As discussed above and reflected by Figure~\ref{strategy_of_T_warm_up}, $T_{\text{warm}}=10$ is not efficient since the two-stage strategy degrades to the fixed strategy if $T_{\text{warm}}\to 0$. 
Conversely, an excessively large $T_{\text{warm}}$ prolongs the high-exploration phase, incurring unnecessary computational overhead for exploration.
Crucially, B-PAC reasoning demonstrates strong robustness across a wide range of intermediate values, suggesting that precise hyperparameter tuning is not required for deployment.
\begin{figure*}[ht]
\vskip 0.2in
\centering

\begin{subfigure}{\textwidth}
  \centering
  \includegraphics[width=\textwidth]{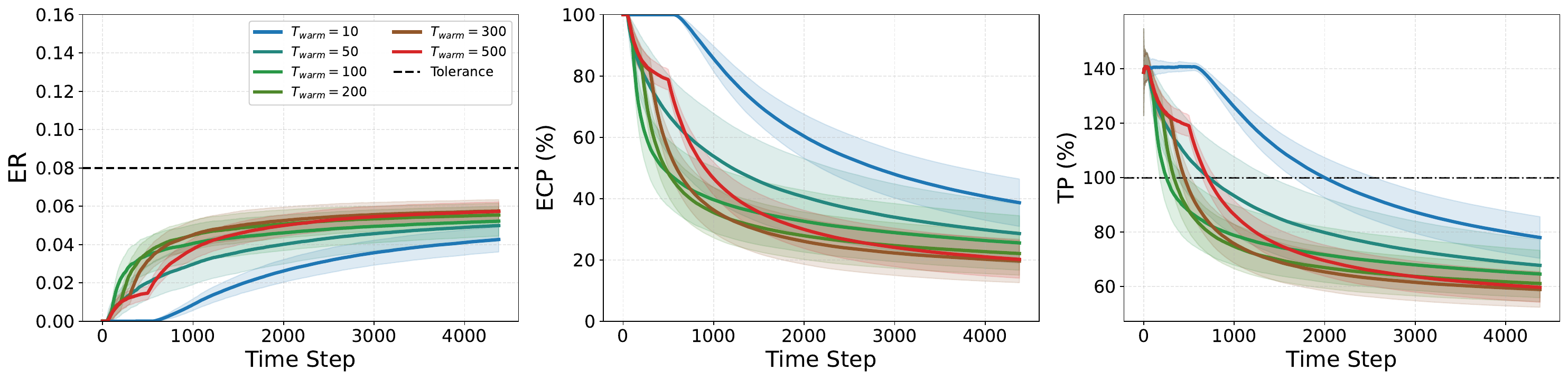}
  \caption*{(a) Magpie}
  \label{T_warm_Magpie}
\end{subfigure}

\begin{subfigure}{\textwidth}
  \centering
  \includegraphics[width=\textwidth]{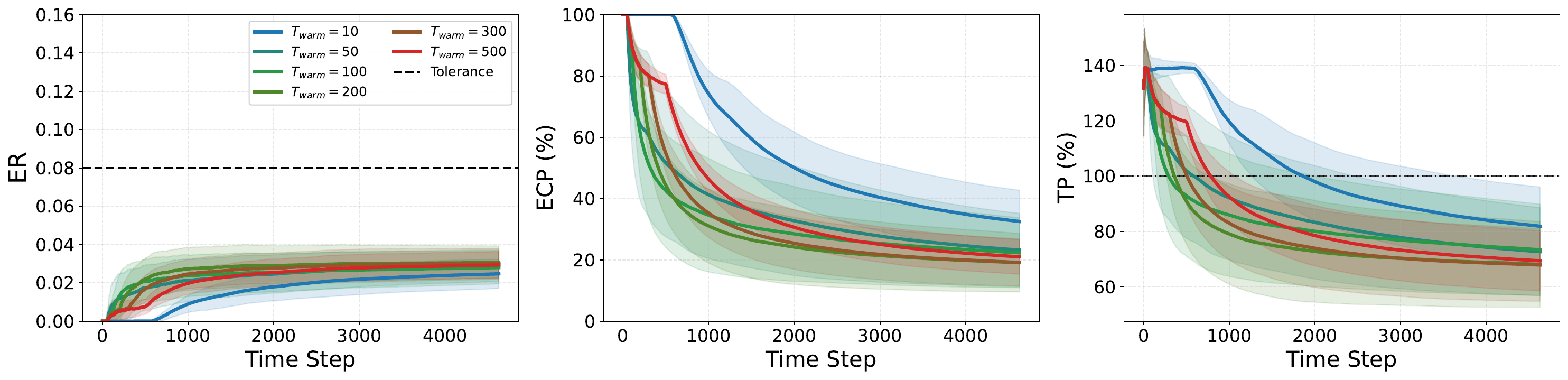}
  \caption*{(b) MATH.}
 \label{T_warm_MATH}
\end{subfigure}

\caption{\textbf{Sensitivity to Warm-up Duration $T_{\text{warm}}$}. We compare the performance of B-PAC reasoning for $T_{\text{warm}}\in\{10,50,100,200,300,500\}$ on Magpie and MATH.
Results demonstrate the robustness of medium-sized $T_{\text{warm}}$.}
\label{strategy_of_T_warm_up}
\vskip -0.2in
\end{figure*}

\end{document}